\newcommand{\bw}{\boldsymbol{w}}
\newcommand{\bu}{\boldsymbol{u}}
\newcommand{\bom}{\boldsymbol{m}}
\newcommand{\ef}{\operatorname{ef}}
\newcommand{\triangleq}{\overset{\triangle}{=}}
\newcommand{\ie}{\textit{i}.\textit{e}.~}
\definecolor{lightpink}{rgb}{1.0, 0.71, 0.76}  
\definecolor{lightgreen}{rgb}{0.56, 0.93, 0.56}  
\def\eqref#1{equation~\ref{#1}}
\def\1{\bm{1}}
\DeclareMathAlphabet{\mathsfit}{\encodingdefault}{\sfdefault}{m}{sl}
\SetMathAlphabet{\mathsfit}{bold}{\encodingdefault}{\sfdefault}{bx}{n}
\def\bv{\boldsymbol{v}}
\def\bA{\boldsymbol{A}}
\newcommand{\GD}{\operatorname{GD}}
\newcommand{\GDM}{\operatorname{GDM}}
\newtheorem{proposition}{Proposition}
\title{When and Why Momentum Accelerates SGD:\\ An Empirical Study}
\author{Jingwen Fu$^{1}$\footnotemark[1], Bohan Wang$^{2}$\footnotemark[1], Huishuai Zhang$^{3}$\footnotemark[2], Zhizheng Zhang$^{3}$, Wei Chen$^{4}$\footnotemark[2], Nanning Zheng$^{1}$\footnotemark[2] \\  fu1371252069@stu.xjtu.edu.cn            \\ 
bhwangfy@gmail.com\\ \{huishuai.zhang, zhizzhang\}@microsoft.com\\ chenwei2022@ict.ac.cn\\
        nnzheng@mail.xjtu.edu.cn \\
       $^{1}$Xi'an Jiaotong University, $^{2}$University of Science and Technology of China \\   
      $^{3}$Microsoft Research, $^{4}$Institute of Computing Technology, Chinese Academy of Sciences
}
\begin{document}

\maketitle

\renewcommand{\thefootnote}
{\fnsymbol{footnote}}
\footnotetext[1]{Equal contribution. Work done during internships at Microsoft Research Asia.}
\footnotetext[2]{Corresponding Authors}
\begin{abstract}

Momentum has become a crucial component in deep learning optimizers, necessitating a comprehensive understanding of when and why it accelerates stochastic gradient descent (SGD). To address the question of ''when'', we establish a meaningful comparison framework that examines the performance of SGD with Momentum (SGDM) under the \emph{effective learning rates} $\eta_{\ef}$, a notion unifying the influence of momentum coefficient $\mu$ and batch size $b$ over learning rate $\eta$. In the comparison of SGDM and SGD with the same effective learning rate and the same batch size, we observe a consistent pattern: when $\eta_{\ef}$ is small, SGDM and SGD experience almost the same  empirical training losses; when $\eta_{\ef}$ surpasses a certain threshold, SGDM begins to perform better than SGD.  Furthermore, we observe that the advantage of SGDM over SGD  becomes more pronounced with a larger batch size. 
For the question of ``why'', we find that the momentum acceleration is closely related to \emph{abrupt sharpening} which is to describe a sudden jump of the directional Hessian along the update direction. Specifically, the misalignment between  SGD and SGDM happens at the same moment that SGD experiences abrupt sharpening  and converges slower. Momentum improves the performance of SGDM by preventing or deferring the occurrence of abrupt sharpening. Together, this study unveils the interplay between momentum, learning rates, and batch sizes, thus improving our understanding of momentum acceleration.

\end{abstract}

\begin{figure}[h]
    \centering
    \includegraphics[width=0.95 \textwidth]{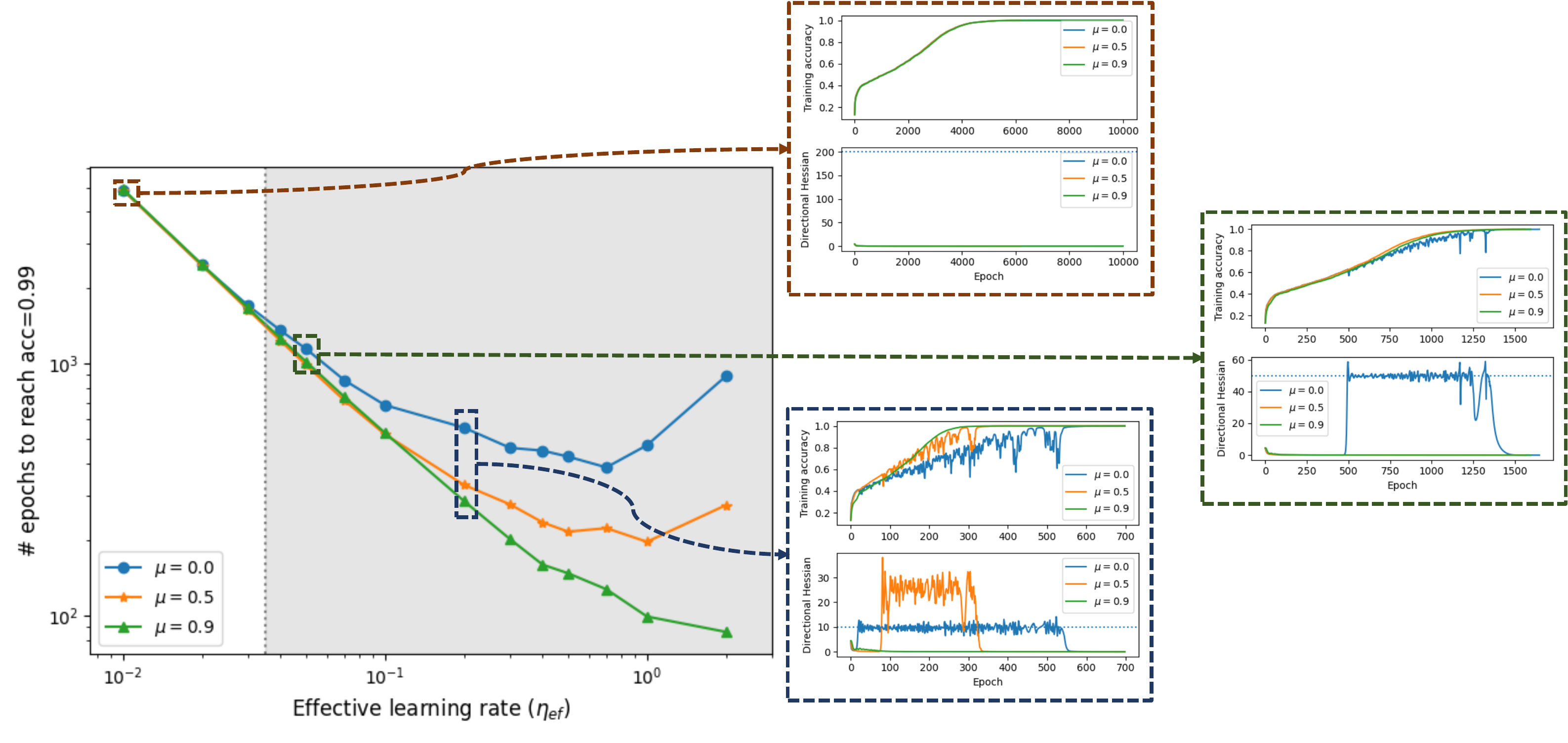}
    \caption{\textbf{The slow-down of a optimizer occurs at the same moment when it experiences abrupt sharpening.} 1) Before experiencing any abrupt sharpening, the training speeds of all optimizers are aligned. The directional Hessian along update direction for each optimizer remains small throughout the training process. 2) The convergence of a optimizer slows down after experiencing a sudden jump of the directional Hessian along update direction, called ``abrupt sharpening''. 3) Momentum defers the abrupt sharpening, thereby helps to accelerate.}
    \label{fig:overview}
\end{figure}
\section{Introduction}
One key challenge in deep learning is to effectively minimize the empirical risk $f(\bw)=\frac{1}{N}\sum_{i=1}^N \ell(\bw,z_i)$,  where $\ell$ the loss function, $\{z_i\}_{i=1}^N$ is the dataset, and $\bw$ is the parameter of deep neural networks. To tackle this challenge, countless optimization tricks have been proposed to accelerate the minimization, including momentum \cite{polyak1964some}, adaptive learning rate \cite{kingma2014adam}, warm-up \cite{goyal2017accurate}, etc.  Among these techniques, momentum, which accumulates gradients along the training trajectory to calculate the update direction, is undoubtedly one of the most popular tricks. Momentum has been widely adopted by state-of-art optimizers including Adam \cite{kingma2014adam}, AMSGrad \cite{reddi2019convergence}, and Lion \cite{chen2023symbolic}.

The widespread use of momentum  necessitates the understanding of \textbf{when} and \textbf{why} momentum works, which can either facilitate a good application of momentum in practice, or help building the next generation of optimizers. However, it is more than surprising that neither of when and why momentum works in deep learning is clear, even for the simplest momentum-based optimizer stochastic gradient descent with momentum (SGDM). 
As for \textbf{when},  \citet{kidambi2018insufficiency} argue that the benefit of momentum appears when the batch size is small, while \citet{shallue2019measuring} find that SGDM outperforms SGD when the batch size is large (close to full batch).  For a fixed batch size, \citet{leclerc2020two} even observe that SGDM can perform either better or worse than SGD depending on the learning rate. These diverse findings make the effect of momentum rather complicated and confusing. As for \textbf{why}, the theoretical benefit of momentum is only justified for convex objectives \cite{polyak1964some}. For highly non-convex deep neural networks, \citet{cutkosky2020momentum} attributes the benefit of momentum to its ability  of canceling out the noise in stochastic gradients, which however, contradicts to the observation that the benefit of momentum is more pronounced for larger batch sizes where the noise is smaller \cite{kunstnernoise}. 

In this paper, we investigate the underlying acceleration mechanism of momentum by systematically comparing the performance of SGD and SGDM. The psedo-code of SGD and SGDM \footnote{We focus on Polyak's momentum in the main text, and verify that our conclusions also hold for Nesterov's momentum in the Appendix \ref{sec:Nesterov}.} is given in Algorithm \ref{alg:sgd}, where SGD is obtained by ignoring the green-highlighted content. 

To proceed, we first establish a meaningful comparison framework to avoid unnecessary complexities. Specifically we compare SGD and SGDM or two SGDMs with different momentum coefficients under a same \emph{effective learning rate}.  The effective learning rate is the learning rate multiplied by a factor $1/(1-\mu)$ for a momentum coefficient $\mu$, which takes into account  the first-order approximation of the update magnitude induced by momentum. Moreover, we evaluate their performances for a wide range of effective learning rates which cover typical choices. 

Our comparison framework gives us a full picture of their performances, which clearly reveals \textbf{when} momentum helps acceleration.  Based on this framework, we observe a consistent pattern: the training speeds of SGD and SGDM are almost the same for small effective learning rates, but when the effective learning rate increases beyond a certain threshold, SGDM begins to perform better than SGD and shows acceleration benefits. 

When varying batch sizes,  the effective learning rate is further multiplied by a factor $k/b$ for a batch size $b$ with respect to a reference batch size $k$, which takes into account the linear scaling effect of batch sizes over the learning rate. We find that reducing batch size has a similar effect as adding momentum: SGDMs with different batch sizes perform similarly with small effective learning rate, while SGDM with smaller batch size outperform that with larger batch size when effective learning rate is larger than a certain threshold. 

To understand why momentum accelerates training, we first focus on the full batch case and study GD(M) since the acceleration effect of momentum is more evident with large batch size. We find that during the training process, the loss of GD deviates from that of GDM at the same time when GD starts to oscillate. We further attribute the oscillation to a phenomenon of ``abrupt sharpening'' that the directional Hessian along the update direction first stays around $0$ and then experiences a sudden jump which leads to oscillation. We show abrupt sharpening is a new feature of the renowned concept Edge of Stability, and more importantly, it can be used to theoretically explain the alignment and deviation between GD and GDM: before GD and GDM exhibit abrupt sharpening, the gradient barely change and the updates of GD and GDM are close; abrupt sharpening slows down the convergence; and momentum can defer abrupt sharpening and thus accelerate. We demonstrate this methodology through Figure \ref{fig:overview}.

We further find that a smaller batch size can also defer abrupt sharpening, which overlaps with the effect of momentum. This coincides with the observation that the benefit of momentum is more pronounced for the case with large batch sizes.

In summary, we empirically investigate the benefit of momentum  and our contributions are as follows.

\begin{itemize}
\item We introduce a meaningful  framework to compare the performances of SGD and SGDM with effective learning rates, which gives a full picture of the momentum benefits.

\item \textbf{When?} The comparison clearly reveals that momentum helps acceleration when the effective learning rates and the batch sizes are large.
\item \textbf{Why?}     We show that once the optimizer experiences abrupt sharpening, the training process slows down and the momentum can significantly postpone the point of abrupt sharpening.
\end{itemize}

\begin{algorithm}
    \centering
    \caption{SGD   and \colorbox{lightgreen}{SGDM}}\label{alg:sgd}
    \hspace*{0.02in} 
    
    \begin{algorithmic}[1]
        \State \textbf{Input:} the loss function $\ell(w,z)$, the initial point $\bw_{1} \in \mathbb{R}^d$,  the batch size $b$, learning rates $\{\eta_t\}_{t=1}^{T}$, \colorbox{lightgreen}{  $\bom_0=0$, and momentum hyperparameters $\{\mu_t\}_{t=1}^{T}$}.
        \State \textbf{For} $t=1\rightarrow T$:
        \State ~~~~~Sample a  mini-batch of data $B_t$ with size $b$ 
       \State ~~~~~Calculate stochastic gradient $\nabla f_{B_t}(w_t)=\frac{1}{b}\sum_{z\in B_t}\ell (w_t,z)$
       \State ~~~~~Update $\bom_{t}\leftarrow$\colorbox{lightgreen}{$\mu_t \bom_{t-1}+$}$\nabla  f_{B_t}(\bw_{t})$
        \State ~~~~~Update $\bw_{t+1}\leftarrow\bw_t-\eta_t \bom_t$
        \State \textbf{End For}
    \end{algorithmic}
\end{algorithm}

\section{Related Works}

\paragraph{Effect of momentum in optimizers.} In Polyak's original paper \cite{polyak1964some}, there is a theoretical proof that GDM converges faster than GD for strongly convex objective functions, and such an analysis is extended to SGDM latter \cite{bollapragada2022fast}. However, no concrete theoretical explanation exists for the effect of momentum over non-convex objectives, including deep neural networks, with all of the theoretical analysis of SGDM over non-convex objective functions provides no faster convergence rate than SGD \cite{liu2020improved,defazio2020momentum}. The benefit of SGDM in deep learning tasks is attributed to its ability to cancel out the noise in stochastic gradient by some works  \cite{cutkosky2020momentum}, but this contradicts to the latter
 experiments about the SGD and SGDM in \cite{kunstnernoise,kidambi2018insufficiency,shallue2019measuring} varying the batch size, showing that noise may not explain the effect of momentum.  \citet{leclerc2020two} observe that whether SGDM performs better further ties to the learning rate, and SGDM can be worse than SGD when the learning rate is large.

\paragraph{Edge of Stability.} \cite{cohen2021gradient} discovers a negative correlation between the sharpness of objective functions in the training process of deep learning tasks and the learning rate, called "Edge of Stability" (EoS). Specifically, when using gradient descent (GD) with learning rate $\eta$, they observe that the sharpness will first progressively increase, and then hover at $\frac{2}{\eta}$. Similar phenomena are latter observed in other optimizers including SGD, SGDM, and Adam \cite{cohen2022adaptive}. Traditionally, optimization analysis requires sharpness  to be smaller than $\frac{2}{\eta}$ to ensure convergence. This is, however, violated by EoS, and several works have tried to understand such a mismatch theoretically. Interesting readers can refer to \cite{ma2022multiscale,ahn2022understanding,arora2022understanding,li2022analyzing,ahn2022learning,zhu2022understanding} for details.

\paragraph{Linear scaling rule of learning rate with batch size.} Linear scaling rule is first proposed by \citep{goyal2017accurate}, suggesting that the when the batch size is smaller than a certain threshold (called critical batch size), scaling the learning rate according to the batch size keep the performance the same. Such a law is further theoretically verified by \cite{ma2018power} which study SGD over quadratic functions. \cite{ma2018power} also show that using linear scaling law above the critical batch size hurt the performance, which is empirically observed by \cite{chen2018effect}.  These methodologies are used in \cite{smithdon} to decay learning rate.

\section{When does momentum accelerate SGD?}
\label{sec:when}

In this section, we explore under what circumstances momentum can accelerate SGD. In Section \ref{subsec: comparison framework}, we first establish a meaningful comparison framework for SGD and SGDM by considering the interplay between momentum and two factors, \ie batch sizes and learning rates. In Section \ref{subsec:momentum-when}, we then conduct experiments under this framework and state our main observations.

\subsection{A comparison framework for SGDMs with different hyperparameters}
\label{subsec: comparison framework}
\textbf{Hyper-parameter scheduler.} We use  constant step-size and constant momentum coefficient across the whole training process, \ie, $\mu_t\equiv \mu$ and $\eta_t\equiv \eta$,  as our primary objective is to understand the acceleration effect of SGDM rather than reproduce state-of-the-art performance.

\textbf{Effective learning rate.} Our aim is to study the essential influence of the momentum coefficient $\mu$ over the performance of SGDM. However, the momentum may affect the performance via different ways. For example, adding $\mu$ will change the update magnitude, which may have the same effect as changing the learning rate. Such effect can be approximated as follows, 
\begin{equation*}
    \bom_t = \sum_{s=1}^t \mu^{t-s} \nabla f_{B_s}(\bw_s)\approx \frac{1-\mu^t}{1-\mu} \nabla f_{B_t}(\bw_t)\rightarrow  \frac{1}{1-\mu} \nabla f_{B_t}(\bw_t)\text{ as } t\rightarrow\infty.
\end{equation*}
This indicates that SGDM with momentum coefficient $\mu$ and learning rate $\eta$ may have the same magnitude of update as SGD with learning rate $\frac{1}{1-\mu}\eta$. When comparing the performances of SGDM with different $\mu$, we want to exclude the effect of momentum that can be compensated  by simply changing the learning rate. Therefore we introduce the concept of \emph{effective learning rate} so that the different setup can be compared fairly to extract the essential effect of momentum.


Additionally, the batch size $b$ is another important hyperparameter in SGDM  whose effect may be compensated by simply changing the learning rate. Specifically, we consider the gradient is averaged (rather than summed) over the individual samples in a minibatch. Larger batch size indicates fewer updates in one epoch. To compensate the number updates in one epoch, we adopt the the Linear Scaling Rule \cite{goyal2017accurate} of the learning rate, which suggests that scaling the learning rate proportionally with the batch size keep the same convergence speed. If the batch size is doubled, doubling the learning rate can keep the convergence speed (with respect to the number of epochs) almost unchanged when the batch size is not too large. This scaling rule has been verified to be effective for models and data with large sizes \cite{goyal2017accurate}. 

Putting these effects together, we propose to compare the performance between SGD and SGDM under the same \emph{effective learning rate}, defined as follows:
\begin{equation*}    \eta^{k}_{\ef}=\underbrace{\frac{1}{1-\mu}}_{\substack{\text{Effect of}\\\text{momentum}}} \cdot \overbrace{\frac{k}{b}}^{\substack{\text{Effect of}\\\text{batch size}}} \cdot \quad \eta,
\end{equation*}
where $k$ is a reference batch size introduced for good visualization for typical choices of batch size and learning rate.  \textbf{When there is no comparison across batch sizes, we simply choose $k=b$, and denote $\eta_{\ef}=\eta^{b}_{\ef}=\frac{1}{1-\mu} \eta$}.

\textbf{Measurement of performance.} As we care about the optimization performance of SGDM, we plot the training loss of SGDM after a prefixed number of epochs $T$, with respect to the effective learning rate for different settings of $\mu$ and $b$ (see Figure \ref{fig:phase_transition}).  We say one setting of SGDM outperforms another, if the former one has a smaller training loss after $T$ epochs for the same effective learning rate.

\subsection{Momentum accelerates training only for large effective learning rates}\label{subsec:momentum-when}

\begin{figure}
\begin{minipage}[c]{0.48\linewidth}
\includegraphics[width=\linewidth]{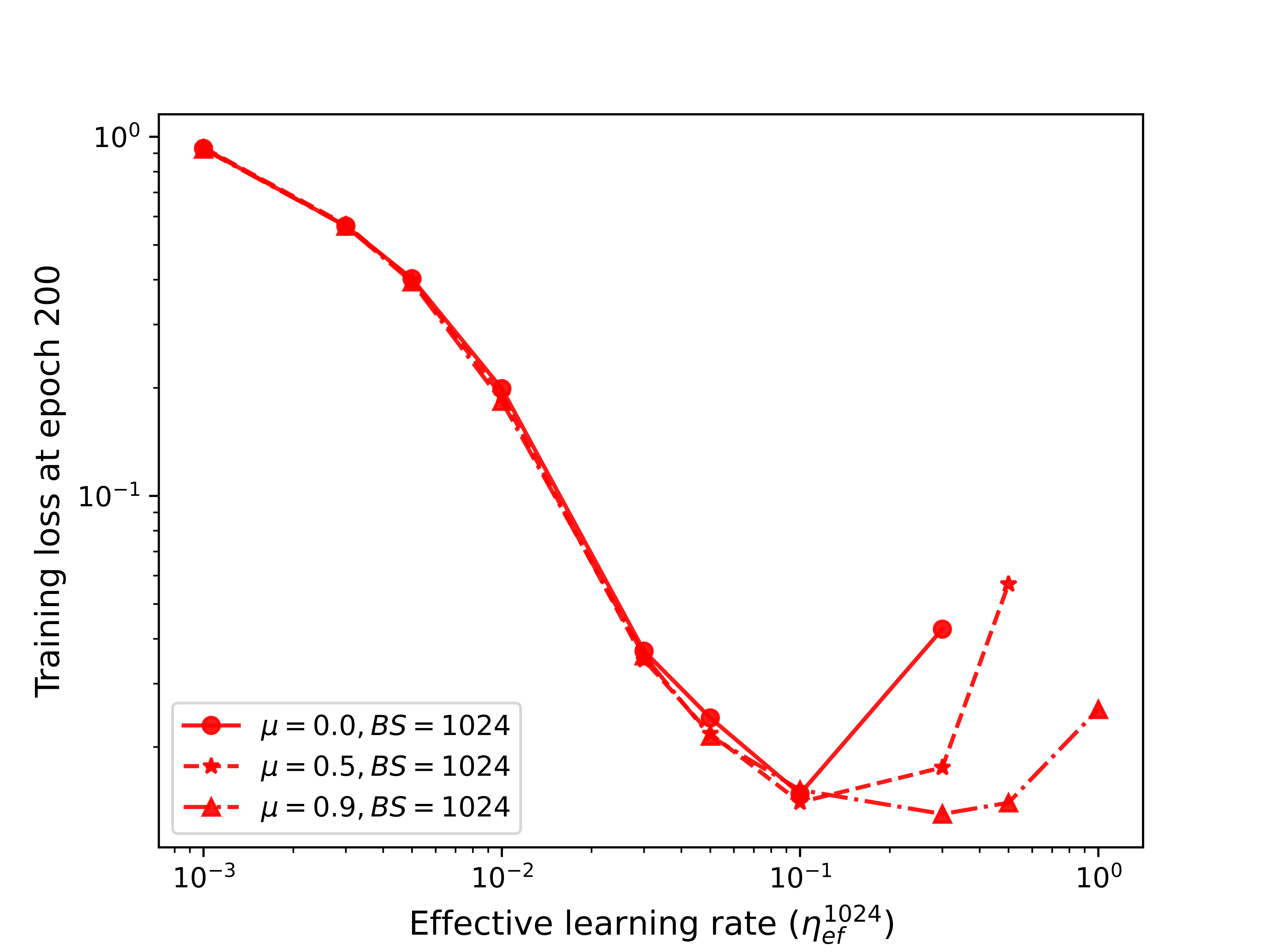}
\caption{\textbf{The training speeds of SGD and SGDM exhibit an align-and-deviate pattern as the effective learning rate  increases.} 1) When the effective learning rate is small, the training speeds of SGDM and SGD are almost the same. 2) After the effective learning rate beyond a certain threshold, SGDM  outperforms SGD.} \label{fig:phase_transition}
\end{minipage}
\hfill
\begin{minipage}[c]{0.48\linewidth}
\includegraphics[width=\linewidth]{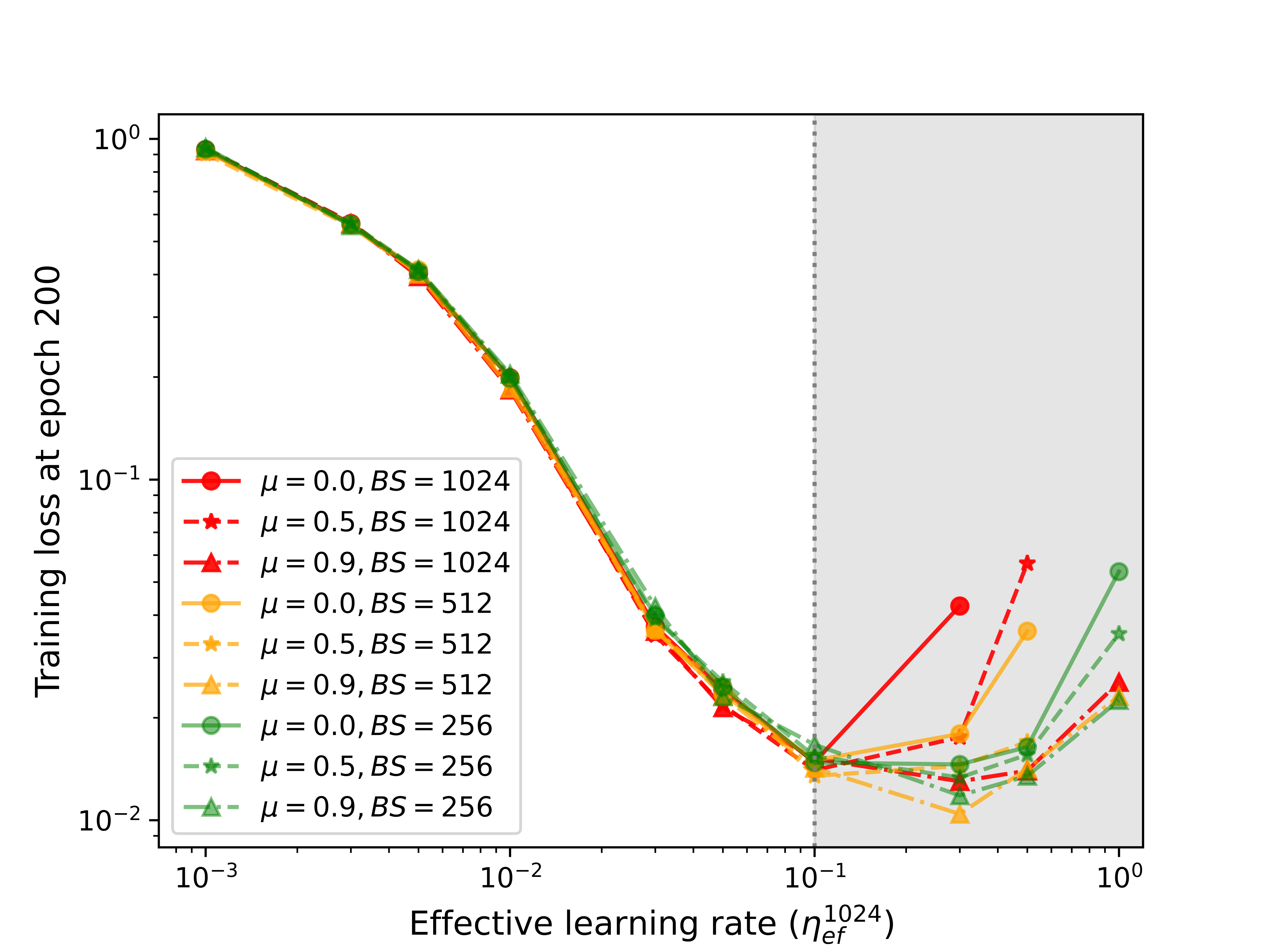}
\caption{\textbf{The benefit of momentum is entangled with batch sizes} 1) Reducing batch size is similar to adding momentum. 2) The gap between SGDM and SGD becomes larger when the batch size increases. 3)The larger momentum coefficient, the smaller gap of SGDMs with different batch sizes.} \label{fig:batchsize-mu}
\end{minipage}%
\end{figure}

We conduct experiments on the CIFAR10 dataset using VGG13-BN network\footnote{More experiments on different architectures and datasets are given in Appendix \ref{sec:dif_dataset_arch}. Our conclusion generally holds.}. We train SGDMs  with batch size 1024 and three values of $\mu=\{0,0.5,0.9\}$, respectively, and choose $k=1024$. We choose the epoch budget $T=200$ (we show in Appendix \ref{sec:dif_dataset_arch} that  our conclusion remains valid regardless of $T$). We note that our findings are also valid for other choices of batch sizes (see Section \ref{subsec:momentum-batchsize-entangle}).

As discussed in Section~\ref{subsec: comparison framework}, we plot curve of  training losses with respect to  effective learning rates for these three settings in Figure \ref{fig:phase_transition}. We summarize the findings as follows.
\begin{itemize}
    \item For small effective learning rates, SGDMs with different values of $\mu$ perform almost the same.  This indicates that momentum does not have the benefit of acceleration because one can always use SGD with a compensated learning rate to reach the performance of SGDM with a specific $\mu$.
    \item As effective learning rate increases beyond some thresholds, the curves with small $\mu$ start deviating from the curve with large $\mu$ progressively, which implies some transition happens. In this regime, we observe the benefit of momentum because simple compensation on learning rate does not helps SGDM with small $\mu$ reach the performance of SGDM with large $\mu$. 
    
\end{itemize}
Overall, for the whole range of $\mu$, SGDM performs better than or equivalent to SGD. 
Such a neat relation can only be observed by introducing effective learning rates to align different values of $\mu$.

In contrast,   \citet{leclerc2020two} present a complicated relation when comparing the performances of SGD and SGDM   under \emph{same learning rates} : when the learning rate is small, SGDM performs better than SGD; however, SGD outperforms SGDM when the learning rate is large. This observation in fact can be explained via the lens of effective learning rate. The key reason is that SGDM experiences  a larger effective learning rate  $\frac{\eta}{1-\mu}$ than that of SGD $\eta$ when their learning rates are the same. This is beneficial for SGDM with small learning rates.  As the learning rate increases, the performance of SGD consistently improves but  the performance of SGDM first improves and then starts to drop after surpassing a sweet point of the effective learning rate as seen in Figure \ref{fig:phase_transition}. 

\subsection{The benefit of momentum is entangled with batch sizes}\label{subsec:momentum-batchsize-entangle}

In this section, we examine the effect of  batch size $b$ and understand how different batch sizes affect the benefit of momentum. We repeat the experiments in Section \ref{subsec:momentum-when}, i.e, experiments on the CIFAR10 dataset using VGG13-BN network for SGDM with $9$ representative choices of hyperparameters $(\mu,b)\in \{0,0.5,0.9\}\times \{256,512,1024\}$. We choose $k=1024$ and the epoch budget $T=200$, the same as before. We plot the result of the training losses with respect to the effective learning rates in Figure \ref{fig:batchsize-mu}. 
 
Our findings are summarized as follows.

\begin{itemize}
    \item \textbf{Reducing batch size is similar to adding momentum.} When comparing SGDs ($\mu=0$) with different batch sizes, we observe that SGDs perform almost the same for small effective learning rates regardless of batch sizes; when the effective learning rate is above a certain threshold, SGD with smaller batch size achieve smaller training loss.  This shows that reducing the batch size has a similar pattern to adding the momentum. Moreover, combined with the result in Section \ref{subsec:momentum-when}, as long as the effective learning rates are in a regime with small values, SGDMs performs almost the same regardless of changing the batch size or the momentum coefficient.

    \item \textbf{The gap between SGDM and SGD becomes larger when the batch size increases.} This coincides with  the  observation that the acceleration effect of momentum is more pronounced  with larger batch sizes  \cite{kunstnernoise,kidambi2018insufficiency,shallue2019measuring}.

    \item \textbf{The larger momentum coefficient, the smaller gap of SGDMs with different batch sizes.} As one increases the momentum coefficient, the gap between the performances of SGDMs with different batch sizes becomes smaller. 

\end{itemize}

To see the last point more clearly, we conduct additional experiments. Specifically, we fix  the effective learning rate $\eta_{\ef}^{1024}=0.1$ and  gradually increase the batch size  to plot a curve of training loss with respect to the batch size in Figure \ref{fig:batch_size_scaling}. We observe that when batch sizes are small, SGDM with different $\mu$s performs almost the same, and when the batch size increases beyond a threshold, SGDM with larger $\mu$ tends to perform better. 

Moreover, we note that a horizontal curve in Figure \ref{fig:batch_size_scaling} is equivalent to the Linear Scaling Law \cite{goyal2017accurate} and we can see that \textbf{momentum extends the range of batch sizes in which the Linear Scaling Law holds.} 

\begin{figure}[h]
    \centering
    \includegraphics[width=0.70 \textwidth]{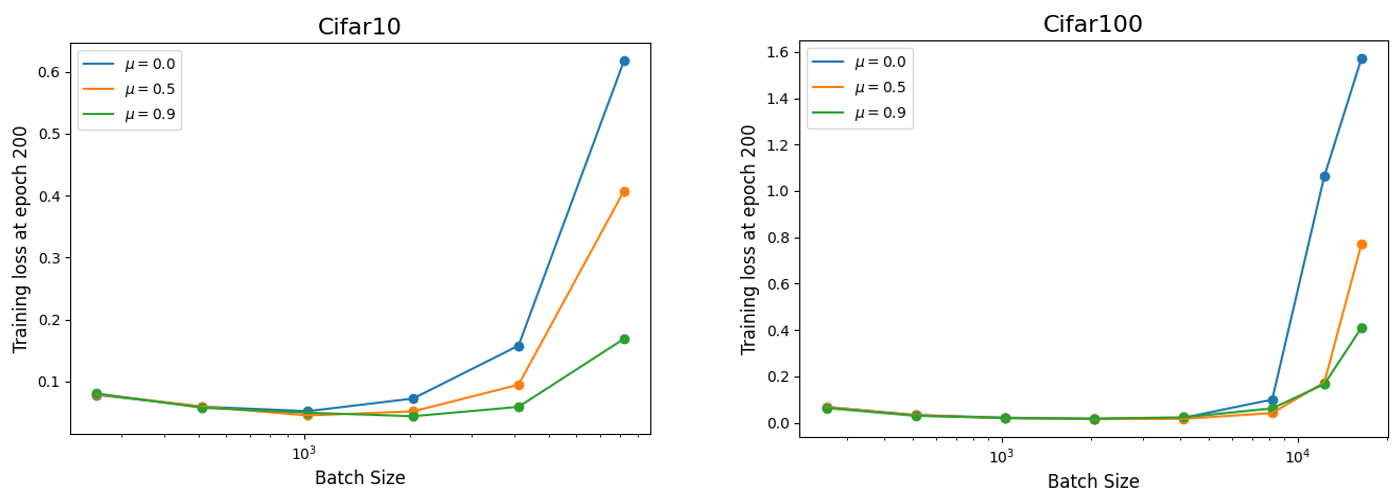}
    \caption{\textbf{Momentum extends the range of batch sizes in which the Linear Scaling Law holds.} 1) A similar align-and-deviate  pattern for SGD and SGDM is also discovered when the batch size is increased. 2) Each curve remains nearly horizontal up to a specific threshold batch size. The threshold batch size for SGDM is greater than that for SGD.}
    \label{fig:batch_size_scaling}
\end{figure}

\section{Why does momentum accelerate SGD?}
In Section \ref{sec:when}, we have explored when momentum accelerates SGD. In this section, we want to understand why momentum accelerates SGD, or more precisely, the mechanism of momentum accelerating SGD. 

To control variable and simplify the analysis, we first focus on comparing GD and GDM, and find that deviation of GD and GDM is related with a phenomenon that \emph{Hessian abruptly sharpens} along the update direction in Section \ref{subsec:eos_phasetransition}. We then show that the abrupt sharpening of Hessian can explain the acceleration of momentum and batch size in Section \ref{subsec: abrupt_effect} and \ref{subsec:small_batch_training}, respectively.

\subsection{Hessian abruptly sharpens when GD deviates from GDM}
\label{subsec:eos_phasetransition}

As the full-batch update is computationally expensive, we use a subset of CIFAR10 with 5K samples, which has been used to  study  GD behavior previously \cite{cohen2021gradient, ahn2022understanding}. The network we use is  fc-tanh\cite{cohen2021gradient}, i.e., a one-hidden-layer fully-connected network with 200 neurons and tanh activation.

We first verify that the align-and-deviate pattern still exist in this task in Figure \ref{fig:gd}A. We then pick one effective learning rate $\eta_{\ef}=0.01$ before the deviation threshold  and one effective learning rate $\eta_{\ef}=0.1$ after the threshold and plot its training loss across epochs in Figure \ref{fig:gd}B and Figure \ref{fig:gd}C. We can see that for $\eta_{\ef}=0.01$, the training curves of GDM and GD are smooth and closely aligned (Figure \ref{fig:gd}B). When  $\eta_{\ef}=0.1$ beyond the threshold in Figure \ref{fig:gd}A, the training curves of GDM and GD align with each other in first few epochs and then the loss of GD starts oscillating and deviates from the loss of GDM. It should be noted that GD becomes slower than GDM, i.e, the curve of GD is strictly on top of that of GDM, at the same moment that GD starts oscillating.



\begin{figure}[h]
    \centering
    \includegraphics[width=0.7 \textwidth]{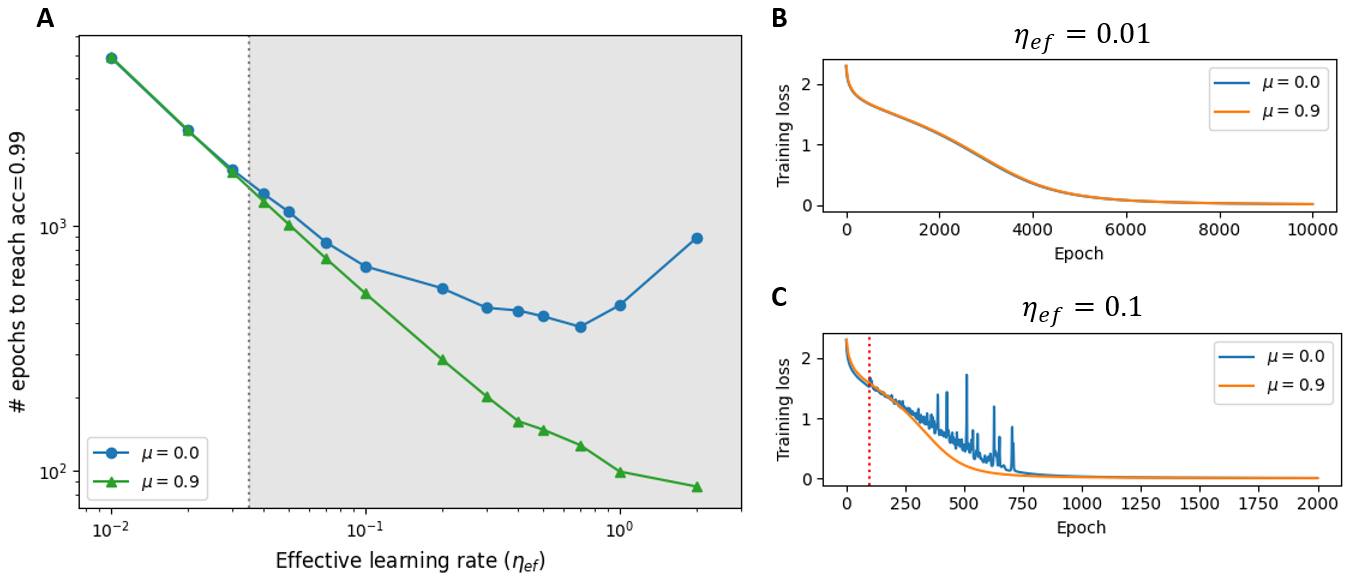}
    \caption{\textbf{Exploration of the training process on CIFAR10-5k dataset.} A: Experiments on Cifar10-5k  gives a similar result as Figure \ref{fig:phase_transition}. B: GD and GDM are aligned during the whole training process under small learning rate. C: GD and GDM are aligned before GD starting oscillating, and deviate after. The red dash line denotes the time when the GD starts oscillating.}
    \label{fig:gd}
\end{figure}

The observation in Figure \ref{fig:gd} provides a partial answer of why momentum accelerates GD by connecting it with preventing oscillation. However, we are still unclear the reason for why the oscillations happen and why GDM is less like to oscillate. With $\bw_t$ be the iteration of GD, we revisit the Taylor expansion of the objective function $f$, which writes
\begin{equation*}
    f(\bw_{t+1})\approx f(\bw_{t})-\eta \Vert  \nabla f(\bw_t)\Vert^2 +\frac{\eta^2}{2} \nabla f(\bw_t)^{\top}\nabla^2 f(\bw_t) \nabla f(\bw_t). 
\end{equation*}
When the loss stably decreases, we have $f(\bw_{t+1})< f(\bw_{t})$, and based on the above approximation, we infer the \emph{directional Hessian} along the update direction $H(\bw_t,\bw_{t+1}-\bw_t)\triangleq \frac{(\bw_{t+1}-\bw_t)^{\top}\nabla^2 f(\bw_t) (\bw_{t+1}-\bw_t)}{ \Vert \bw_{t+1}-\bw_t\Vert^2}$ satisfying $\eta H(\bw_t,\bw_{t+1}-\bw_t)< 2$. On the other hand, we have $f(\bw_{t+1})\approx f(\bw_{t})$ when oscillating \cite{ahn2022understanding}, and simple calculation gives $\eta H(\bw_t,\bw_{t+1}-\bw_t)\approx 2$. Therefore, we conjecture that the oscillation is due to a sharp transition of the  \emph{directional Hessian} along the update. 

To verify our conjecture, we plot the directional Hessian  in Figure \ref{fig:gdhess}.  We observe that there is a sharp transition of the directional Hessian along the update: 
it first stays around $0$ before oscillation, and then experiences a sudden jump at the time of oscillation. We referred to this phenomenon as ``directional Hessian abrupt sharpening'' or ``abrupt sharpening'' for short.

\begin{figure}[h]
    \centering
    \includegraphics[width=0.95 \textwidth]{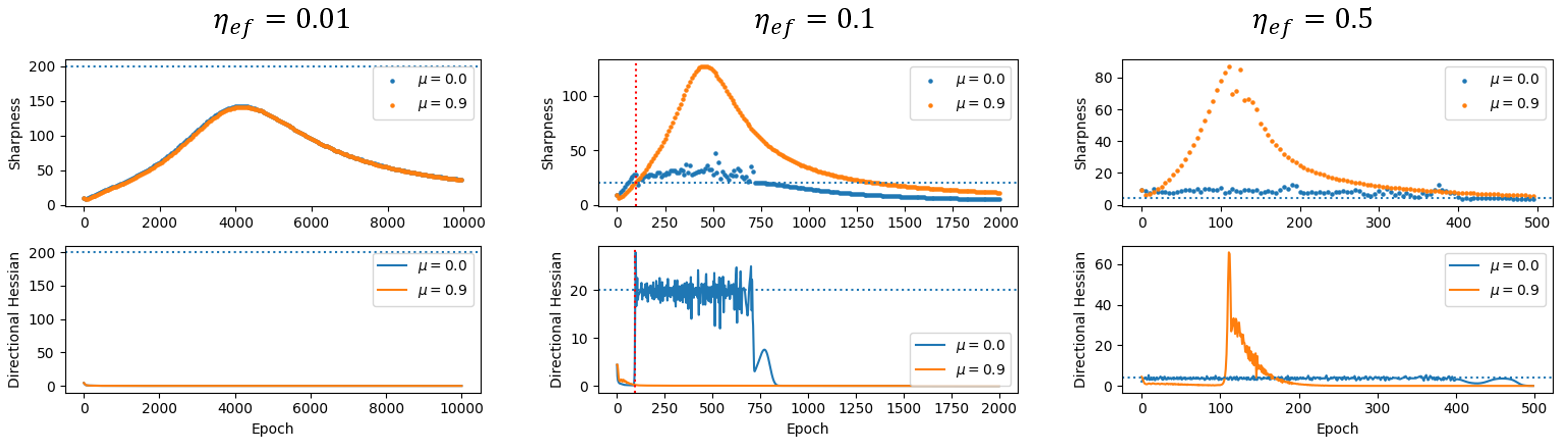}
    \caption{\textbf{Sharpness and directional Hessian on CIFAR10-5k dataset.} The dashed blue line represents the threshold  $\frac{2}{\eta_{\ef}}$.  \textit{Left:} With a small effective learning rate, the directional Hessian of GD and GDM are around $0$.  \textit{Center:} With a larger effective learning rate, GD exhibits abrupt sharpening and training loss starts to oscillate (marked by the red dash line) when the sharpness of GD surpasses $\frac{2}{\eta_{\ef}}$, while directional Hessian of GDM stays around $0$. \textit{Right:} With an even larger effective learning rate, both GDM and GD exhibits abrupt sharpening, but much later for GDM .}
    \label{fig:gdhess}
\end{figure}
Abrupt sharpening explains why the oscillations happen.  We further notice that abrupt sharpening is closely related to an existing concept called "edge of stability" (EoS), which describes the phenomenon that during the training of GD, sharpness, i.e., the maximum eigenvalue of Hessian, will gradually increase until it reaches $\frac{2}{\eta}$ and then hover at it. The phenomenon of gradually increasing sharpness is denoted as "progressive sharpening". It seems to contradict with abrupt sharpening of directional Hessian, but we show that abrupt sharpening is a joint outcome of progressive sharpening and renowned degenerate Hessian of deep neural networks \cite{sagun2016eigenvalues} through the following proposition. Consequently, abrupt sharpening can be viewed as a newfound component of EoS.
\begin{proposition}
\label{prop: convergence_direction}
Given a minimization problem $\min_{\bw\in \mathbb{R}^d} f(\bw)$, we consider minimizing its quadratic function approximation  around a minimizer $\bw^*$, i.e., $\tilde{f}(\bw)\triangleq \frac{1}{2}(\bw-\bw^*)^{\top} \nabla^2f(\bw^*) (\bw-\bw^*)+f(\bw^*)$. Let $\bw_t$ be the parameter given by GD with learning rate $\eta_{ef}$ at the $t$-th iteration. Let $\mathcal{A}$ be the space of eigenvectors of $\nabla^2f(\bw^*)$ corresponding to the maximum eigenvalue of $\nabla^2f(\bw^*)$. For almost every $\bw_0\in \mathbb{R}^d$ , $\lim_{t\rightarrow \infty}\frac{\nabla \tilde{f}(\bw_t)}{\Vert \nabla \tilde{f}(\bw_t) \Vert} \in \mathcal{A} $ if and only if $\lambda_{\max}(\nabla^2f(\bw^*))> \frac{2}{\eta_{\ef}}-\lambda_{\min} (\nabla^2f(\bw^*))$.
\end{proposition}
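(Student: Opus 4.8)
\section*{Proof plan for Proposition~\ref{prop: convergence_direction}}

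The plan is to reduce the GD dynamics on the quadratic surrogate $\tilde{f}$ to a decoupled linear recursion in the eigenbasis of $H \triangleq \nabla^2 f(\bw^*)$, and then carry out a dominant-mode (power-iteration-style) analysis of the normalized gradient. Since $\bw^*$ is a minimizer, $H$ is symmetric positive semidefinite, so I would fix a spectral decomposition $H=\sum_i \lambda_i v_i v_i^{\top}$ with orthonormal $v_i$ and $\lambda_{\max}=\lambda_1\ge\cdots\ge\lambda_d=\lambda_{\min}\ge 0$. Because $\nabla\tilde{f}(\bw)=H(\bw-\bw^*)$, the update $\bw_{t+1}=\bw_t-\eta_{\ef}H(\bw_t-\bw^*)$ gives, for the error $\bm{e}_t\triangleq\bw_t-\bw^*$, the closed form $\bm{e}_t=(I-\eta_{\ef}H)^t\bm{e}_0$. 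Expanding $\bm{e}_0=\sum_i c_i v_i$ yields $\nabla\tilde{f}(\bw_t)=\sum_i \lambda_i c_i (1-\eta_{\ef}\lambda_i)^t v_i$, so the gradient decouples into independent scalar modes, the $i$-th scaling like $(1-\eta_{\ef}\lambda_i)^t$.

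Next I would identify the asymptotic direction. The governing quantity is $|1-\eta_{\ef}\lambda_i|$: the modes maximizing it (among those with $\lambda_i>0$ and $c_i\neq 0$, since zero-eigenvalue modes are annihilated by $H$ and contribute nothing to the gradient) dominate the sum, so the component of the normalized gradient orthogonal to their span vanishes. Hence $\nabla\tilde{f}(\bw_t)/\|\nabla\tilde{f}(\bw_t)\|$ aligns with $\mathcal{A}$ precisely when $\lambda_{\max}$ is the \emph{strict} maximizer of $\lambda\mapsto|1-\eta_{\ef}\lambda|$ over the contributing spectrum, i.e. $|1-\eta_{\ef}\lambda_{\max}|>|1-\eta_{\ef}\lambda_i|$ for every $\lambda_i<\lambda_{\max}$. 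The a.e.-$\bw_0$ hypothesis enters here: the set of $\bw_0$ for which some relevant $c_i=\langle\bw_0-\bw^*,v_i\rangle$ vanishes is a finite union of affine hyperplanes, hence Lebesgue-null, so generically every mode is present and any surviving subdominant or tied mode genuinely obstructs alignment.

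Finally I would convert the domination condition into the stated inequality. Since $\lambda\mapsto|1-\eta_{\ef}\lambda|$ is convex, its maximum over $[\lambda_{\min},\lambda_{\max}]$ is attained at an endpoint, so the binding competitor to $\lambda_{\max}$ is $\lambda_{\min}$, and strict domination reduces to $|1-\eta_{\ef}\lambda_{\max}|>|1-\eta_{\ef}\lambda_{\min}|$. A short case analysis shows this forces $\eta_{\ef}\lambda_{\max}>1$ (so the left side equals $\eta_{\ef}\lambda_{\max}-1$) and then turns the inequality into $\eta_{\ef}(\lambda_{\max}+\lambda_{\min})>2$, i.e. $\lambda_{\max}>\tfrac{2}{\eta_{\ef}}-\lambda_{\min}$, which gives both directions of the equivalence (at equality the $\lambda_{\min}$ mode ties and alignment fails, matching the strict inequality).

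I expect the main obstacle to lie in the careful bookkeeping of the degenerate and boundary cases rather than in the core linear algebra: one must read ``$\lim\in\mathcal{A}$'' as the vanishing of the $\mathcal{A}^{\perp}$-component, since $(1-\eta_{\ef}\lambda_{\max})^t$ may alternate sign so the signed unit vector need not converge; one must make the measure-zero exclusion of bad initializations rigorous; and one must reconcile the role of vanishing eigenvalues, which are excluded from the gradient's effective spectrum yet still enter the threshold as $\lambda_{\min}$ (this is clean when $H\succ 0$, and is exactly the point where the ``degenerate Hessian'' picture of the surrounding discussion pushes $\lambda_{\min}$ toward $0$ and the threshold toward $2/\eta_{\ef}$).
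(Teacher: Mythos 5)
Your proposal follows essentially the same route as the paper's proof: diagonalize the Hessian, reduce GD to decoupled geometric modes $(1-\eta_{\ef}\lambda_i)^t$, show that alignment with $\mathcal{A}$ holds iff $\vert 1-\eta_{\ef}\lambda_{\max}\vert > \vert 1-\eta_{\ef}\lambda_{\min}\vert$ (using the endpoint property of $\lambda \mapsto \vert 1-\eta_{\ef}\lambda\vert$), unwind this to $\lambda_{\max} > \frac{2}{\eta_{\ef}}-\lambda_{\min}$, and confine bad initializations to a measure-zero union of hyperplanes. If anything, you are more careful than the paper, which analyzes $\bw_t/\Vert \bw_t\Vert$ rather than the normalized gradient and passes silently over the sign-alternation and zero-eigenvalue subtleties you explicitly flag.
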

\citet{sagun2016eigenvalues} observes that in deep learning tasks, the smallest eigenvalue of Hessian $\lambda_{\min} (\nabla^2f(\bw^*))$ is close to $0$. This together with Proposition \ref{prop: convergence_direction} indicates that the update direction of GD would start to align with the eigenspace of the maximum eigenvalue only after the sharpness is very close to $\frac{2}{\eta}$ (thus in the early stage, directional sharpness stays around $0$). Once such an alignment starts, it is rapid because the convergence rate in Proposition \ref{prop: convergence_direction} is exponential (please see the proof in Appendix \ref{sec:proof} for details), which explains the abrupt sharpening of directional Hessian.

\subsection{Abrupt sharpening can explain the acceleration of momentum}
\label{subsec: abrupt_effect}
Here we show that abrupt sharpening can be used to explain the acceleration of momentum.

\textbf{Small directional Hessian explains the alignment between GD and GDM.} Intuitively, when directional Hessian is relatively small, GD and GDM are like walking straightly on a line because small directional Hessian implies small change of gradient along the update direction. This agrees with the setting where we introduce effective learning rate, i.e., $\nabla f(\bw_1)\approx \nabla f(\bw_2)\approx \cdots \approx \nabla f(\bw_t)$, and thus GDM performs similarly as GD under the same effective learning rate. This perfectly explains the alignment between GD and GDM before oscillation. We summarize the above intuition as the following property.
\begin{proposition}
\label{prop:gdgdm}
Denote the iterations of GD as $\{\bw_t^{\GD}\}_{t=1}^{\infty}$ and those of GDM as $\{\bw_t^{\GDM}\}_{t=1}^{\infty}$. If the directional Hessians satisfy $H(\bw_s^{\GD}, \bw_{s+1}^{\GD}-\bw_{s}^{\GD})\approx 0$ and $H(\bw_s^{\GDM}, \bw_{s+1}^{\GDM}-\bw_{s}^{\GDM})\approx 0$, $\forall s\le t-1$, then we have
$
    f(\bw_t^{\GD})\approx f(\bw_t^{\GDM}).
$
\end{proposition}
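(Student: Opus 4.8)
The plan is to reduce the claim to a statement about the per-step decrease of the loss, and then show that under the small-directional-Hessian hypothesis both $\GD$ and $\GDM$ decrease the loss at the same first-order rate. The starting point is the second-order Taylor expansion already used in the text: for any update $\bw_{s+1}-\bw_s$,
\begin{equation*}
f(\bw_{s+1}) \approx f(\bw_s) + \nabla f(\bw_s)^{\top} (\bw_{s+1}-\bw_s) + \tfrac{1}{2}\|\bw_{s+1}-\bw_s\|^2\, H(\bw_s,\bw_{s+1}-\bw_s).
\end{equation*}
Since $H(\bw_s^{\GD},\cdot)\approx 0$ and $H(\bw_s^{\GDM},\cdot)\approx 0$ for every $s\le t-1$, the quadratic term vanishes for both trajectories, leaving $f(\bw_{s+1})\approx f(\bw_s)+\nabla f(\bw_s)^{\top}(\bw_{s+1}-\bw_s)$. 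Thus I only need to compare the accumulated first-order decrements $\sum_{s=1}^{t-1}\nabla f(\bw_s)^{\top}(\bw_{s+1}-\bw_s)$ for the two methods, which both start from the common point $\bw_1$ (and $\bom_0=0$).

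The key step is to show that the gradient stays essentially frozen along each trajectory, $\nabla f(\bw_s)\approx \nabla f(\bw_1)=:\vg$ for all $s\le t$. Intuitively, a vanishing directional Hessian means $f$ is affine along the update directions, so the change of gradient projected onto those directions — hence the directional derivative that drives the descent — is preserved from step to step. I would make this precise by induction, using $\nabla f(\bw_{s+1})-\nabla f(\bw_s)\approx \nabla^2 f(\bw_s)(\bw_{s+1}-\bw_s)$ together with the fact that the updates remain approximately parallel to $\vg$. With the gradient frozen, the $\GD$ update is $\bw_{s+1}^{\GD}-\bw_s^{\GD}=-\eta_{\ef}\vg$, giving a per-step decrement $-\eta_{\ef}\|\vg\|^2$. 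For $\GDM$ the buffer telescopes as $\bom_s=\sum_{j=1}^s \mu^{s-j}\nabla f(\bw_j)\approx \frac{1-\mu^s}{1-\mu}\vg\approx \frac{1}{1-\mu}\vg$, which is exactly the approximation motivating the effective learning rate, so $\bw_{s+1}^{\GDM}-\bw_s^{\GDM}=-\eta\,\bom_s\approx -\eta_{\ef}\vg$ and the per-step decrement is again $-\eta_{\ef}\|\vg\|^2$.

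Summing the matched per-step decrements over $s=1,\dots,t-1$ from the shared initial value $f(\bw_1)$ then yields $f(\bw_t^{\GD})\approx f(\bw_1)-(t-1)\eta_{\ef}\|\vg\|^2\approx f(\bw_t^{\GDM})$, which is the desired conclusion.

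I expect the main obstacle to be the freezing-of-the-gradient step: the hypothesis only controls the scalar directional Hessian along the one-dimensional update direction, not the full operator $\nabla^2 f(\bw_s)$, so a priori the gradient could drift in directions orthogonal to the update. The argument therefore hinges on showing that the successive update directions stay approximately collinear (all pointing along $-\vg$), so that the one-dimensional control is precisely what is needed; this collinearity must be established self-consistently with the frozen-gradient claim inside the induction. A secondary, more benign point is the momentum warm-up transient: the dropped $\mu^s$ terms leave $\GDM$ lagging $\GD$ by a bounded offset of order $\frac{\mu}{1-\mu}$ steps, which is negligible relative to $t$ and is absorbed into the $\approx$, but should be acknowledged explicitly.
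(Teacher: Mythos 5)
Your reduction to matched per-step decrements is the right target --- it is exactly what the paper's own proof establishes --- but the lemma you rest it on is not a consequence of the hypothesis, and the obstacle you flag at the end is not a technicality you can expect to discharge: it is fatal to this route. The assumption controls only the scalar quadratic form $H(\bw_s,\bw_{s+1}-\bw_s)$, whereas ``the gradient stays frozen'' requires smallness of the vector $\nabla^2 f(\bw_s)(\bw_{s+1}-\bw_s)$; these are genuinely different objects. Concretely, with $\nabla^2 f=\begin{pmatrix}0&c\\ c&0\end{pmatrix}$ and $\nabla f\propto \boldsymbol{e}_1$, the directional Hessian along the GD update is exactly $0$ while the one-step gradient change is $\propto c\,\boldsymbol{e}_2$ with $c$ arbitrary. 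Worse for the global claim $\nabla f(\bw_s)\approx\nabla f(\bw_1)$ for all $s\le t$: take a quadratic whose Hessian has operator norm $\epsilon\approx 0$. The hypothesis then holds at every step for both GD and GDM, yet over a horizon $t\sim 1/(\eta_{\ef}\epsilon)$ the gradient can rotate by an $O(1)$ angle; your frozen-gradient claim and the collinearity of updates both fail, $\bom_s\not\approx\frac{1}{1-\mu}\nabla f(\bw_1)$, and yet the conclusion $f(\bw_t^{\GD})\approx f(\bw_t^{\GDM})$ remains true. So your intermediate lemma is strictly stronger than the proposition and false under its hypotheses, and the self-consistent induction you sketch (collinearity justified by frozen gradients, frozen gradients justified by collinearity) has no independent input from the assumption with which to break the circle.

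The paper's proof shows how to avoid ever needing the gradient \emph{direction} to be stable: it only lets a gradient difference appear when paired with an update direction, which is precisely what the hypothesis controls. For GD, the update is parallel to the gradient, so
\begin{equation*}
\Vert \nabla f(\bw_{k+1}^{\GD})\Vert^2-\Vert \nabla f(\bw_k^{\GD})\Vert^2\approx 2\langle \nabla f(\bw_k^{\GD}),\nabla^2 f(\bw_k^{\GD})(\bw_{k+1}^{\GD}-\bw_k^{\GD})\rangle=-2\eta_{\ef}H(\bw_k^{\GD},\bw_{k+1}^{\GD}-\bw_k^{\GD})\Vert \nabla f(\bw_k^{\GD})\Vert^2\approx 0,
\end{equation*}
i.e.\ the gradient \emph{norm} (not the vector) is preserved, and each step decreases the loss by $\approx\eta_{\ef}\Vert\nabla f(\bw_k^{\GD})\Vert^2$. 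For GDM, where the update is not parallel to the gradient, the paper introduces the auxiliary sequence $\bu_t=\frac{\bw_t^{\GDM}-\mu\bw_{t-1}^{\GDM}}{1-\mu}$, which obeys the GD-like recursion $\bu_{t+1}=\bu_t-\eta_{\ef}\nabla f(\bw_t^{\GDM})$, and telescopes Taylor expansions so that every error term has the form $\langle \bw_k^{\GDM}-\bw_{k-1}^{\GDM},\nabla f(\bw_k^{\GDM})-\nabla f(\bw_{k-1}^{\GDM})\rangle\approx H(\bw_{k-1}^{\GDM},\bw_k^{\GDM}-\bw_{k-1}^{\GDM})\Vert\bw_k^{\GDM}-\bw_{k-1}^{\GDM}\Vert^2\approx 0$; it then closes by relating $f(\bu_t)$ back to $f(\bw_t^{\GDM})$. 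If you want to keep your structure, the repair is to weaken ``frozen over the whole horizon'' to ``frozen over a window of length $O(1/(1-\mu))$,'' which suffices for $\bom_s\approx\frac{1}{1-\mu}\nabla f(\bw_s^{\GDM})$, and then run a norm-preservation induction to match the decrements $\eta_{\ef}\Vert\nabla f(\bw_s^{\cdot})\Vert^2$ across the two methods --- but at that point you will have essentially reconstructed the paper's argument.
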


\textbf{Momentum defers abrupt sharpening, and thus accelerates GD.} First of all, we show again through quadratic programming that momentum has the effect to defer abrupt sharpening.

\begin{proposition}
\label{prop: convergence_direction_momentum}
Let $f$, $\bw^*$ and  $\tilde{f}$ and $\mathcal{A}$ be defined in Proposition \ref{prop: convergence_direction_momentum}. Let $\bw_t$ be the parameter given by GDM at the $t$-th iteration.  Then, for almost everywhere $\bw_0\in \mathbb{R}^d$, $\lim_{t\rightarrow \infty}\frac{\nabla \tilde{f}(\bw_t)}{\Vert \nabla \tilde f(\bw_t) \Vert} \in \mathcal{A} $ if and only if $\lambda_{\max}(\nabla^2 f(\bw))> \frac{2(1+\mu)}{(1-\mu)\eta_{\ef}}-\lambda_{\min} (\nabla^2 f(\bw))$.
\end{proposition}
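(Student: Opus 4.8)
The plan is to exploit the quadratic structure exactly as in Proposition~\ref{prop: convergence_direction}, replacing the scalar gradient-descent recursion by the two-dimensional heavy-ball recursion. Write $H \triangleq \nabla^2 f(\bw^*)$, so that $\nabla \tilde f(\bw) = H(\bw - \bw^*)$, and diagonalize $H$ in an orthonormal eigenbasis $\{\bv_i\}$ with eigenvalues $\lambda_i$. Expanding $\bw_t - \bw^* = \sum_i c_i(t)\bv_i$ and using the GDM update with actual step size $\eta = (1-\mu)\eta_{\ef}$ and buffer $\bom_t = \mu\bom_{t-1} + \nabla\tilde f(\bw_t)$, the dynamics decouple across eigendirections. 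Along direction $i$ the pair formed by the displacement $c_i(t)$ and the momentum-buffer coordinate evolves by a fixed $2\times 2$ matrix whose characteristic polynomial is $\rho^2 - (1+\mu-\eta\lambda_i)\rho + \mu = 0$. First I would record this reduction and the closed form of its two roots.

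Next I would study the spectral radius $\rho(\lambda)$ of this recursion as a function of the eigenvalue $\lambda$. Setting $\beta \triangleq 1+\mu-\eta\lambda$, the roots are complex conjugates of modulus $\sqrt\mu$ whenever $\beta^2 \le 4\mu$, i.e. on the plateau $\eta\lambda \in [(1-\sqrt\mu)^2,(1+\sqrt\mu)^2]$, and are real with $\rho(\lambda) = \tfrac{1}{2}(|\beta|+\sqrt{\beta^2-4\mu})$ outside it; this makes $\rho(\lambda)$ a U-shaped profile that decreases, is flat at $\sqrt\mu$, then increases. Since the gradient component along $\bv_i$ is $\lambda_i c_i(t)$ and grows like $\lambda_i\,\rho(\lambda_i)^t$ for a generic start (directions with $\lambda_i=0$ contribute no gradient and are irrelevant), the normalized gradient converges into $\mathcal{A}$ precisely when the top eigenvalue strictly dominates the growth, i.e. $\rho(\lambda_{\max}) > \rho(\lambda)$ for every other eigenvalue $\lambda$ present. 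By the U-shape the binding competitor is $\lambda_{\min}$, and because $\rho$ is strictly increasing in $|\beta|$ on the real branch, $\rho(\lambda_{\max}) > \rho(\lambda_{\min})$ reduces to $|\beta_{\max}| > |\beta_{\min}|$, that is $\eta\lambda_{\max} - (1+\mu) > (1+\mu) - \eta\lambda_{\min}$. This rearranges to $\eta(\lambda_{\max}+\lambda_{\min}) > 2(1+\mu)$, and substituting $\eta=(1-\mu)\eta_{\ef}$ gives exactly $\lambda_{\max} > \tfrac{2(1+\mu)}{(1-\mu)\eta_{\ef}} - \lambda_{\min}$. Specializing $\mu=0$ recovers the threshold of Proposition~\ref{prop: convergence_direction}, a useful sanity check.

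Finally I would package both implications together with the ``almost every $\bw_0$'' qualifier: genericity guarantees that $c_i(0)$ and the coefficient of the dominant root are nonzero in each relevant eigenspace, the exceptional configurations forming a measure-zero set, so the dominant mode actually appears; when the condition fails we have $\rho(\lambda_{\min}) \ge \rho(\lambda_{\max})$ and the $\lambda_{\min}$-mode persists, preventing convergence into $\mathcal{A}$. The main obstacle is the complex-root plateau: there two distinct eigenvalues can share modulus $\sqrt\mu$ while oscillating at different frequencies, so the limit need not exist and the clean reduction to the endpoints $\lambda_{\max},\lambda_{\min}$ must be justified rather than assumed. The resolution is the empirical regime of \citet{sagun2016eigenvalues}, where $\lambda_{\min}\approx 0$ forces $\beta_{\min} = 1+\mu-\eta\lambda_{\min} \ge 2\sqrt\mu$ and, at the threshold, $\beta_{\max} \le -2\sqrt\mu$; both extreme eigenvalues then lie on the real branches, so the endpoint comparison is valid and the boundary computation above is exact.
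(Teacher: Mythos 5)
Your proposal follows essentially the same route as the paper's proof: diagonalize the quadratic, reduce each coordinate to the heavy-ball second-order recurrence with characteristic polynomial $\rho^2-\bigl((1+\mu)-(1-\mu)\eta_{\ef}\lambda_i\bigr)\rho+\mu$, compare the maximal root moduli via the symmetry (U-shape) of that modulus about $(1-\mu)\eta_{\ef}\lambda=1+\mu$ with an endpoint argument, and invoke genericity of $\bw_0$. Your complex-plateau caveat is well placed rather than a defect: the paper's own proof in fact derives the additional condition $(1-\mu)\eta_{\ef}\lambda_{\max}>(1+\sqrt{\mu})^2$ (which the proposition statement omits), and this is exactly what your real-branch restriction in the $\lambda_{\min}\approx 0$ regime supplies.
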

Comparing Proposition \ref{prop: convergence_direction_momentum} with Proposition \ref{prop: convergence_direction}, we observe that with a relative small $\lambda_{\min}$, the required $\lambda_{\max}$ for abrupt sharpening appearance of GDM is $\frac{(1+\mu)}{1-\mu}$ (which is $19$ when $\mu=0.9$) times larger than that of GD. As the sharpness progressively increases, reaching the required sharpness of GDM takes a much longer time than reaching that of GD (an extreme case is that abrupt sharpening happens in GD but not in GDM). Meanwhile, entrance of edge of stability has been known to slow down the convergence. In \cite{ahn2022learning}, it is shown that when not entering EoS, GD converges in $\mathcal{O}(1/\eta_{\ef})$ iterations, but require $\Omega(1/\eta_{\ef}^2)$ iterations to converge in the EoS regime. Together, we arrive at the conclusion that momentum can accelerate GD via deferring the entrance of EoS (abrupt sharpening).

\subsection{Extending the analysis to stochastic case: interplay between momentum and batch size}
\label{subsec:small_batch_training}
 Over the same experiment of Sections \ref{subsec:eos_phasetransition} and \ref{subsec: abrupt_effect}, we first plot the training curves of GD, GDM, and SGD with batch size $250$ and $\eta^{5000}_{\ef}=0.1$ in Figure \ref{fig:CIFAR10_5k_SGD}A. Specifically, we find that  \textbf{stochastic noise can also defer abrupt sharpening}: GD enters EoS during the training process, while SGD and GDM does not and they remain well-aligned throughout the training process.

Since in previous section, we have explained that entrance of EoS slows down the convergence, such an observation explains why in Figure \ref{fig:phase_transition}, reducing batch size also accelerates SGDM with respect to the number of passes of the data. Furthermore, this observation also explains why the effect of momentum is more pronounced when batch size is large since stochastic noise and momentum has an overlapping effect in preventing abrupt sharpening.
\begin{figure}[h]
    \centering
    \includegraphics[width=0.95 \textwidth]{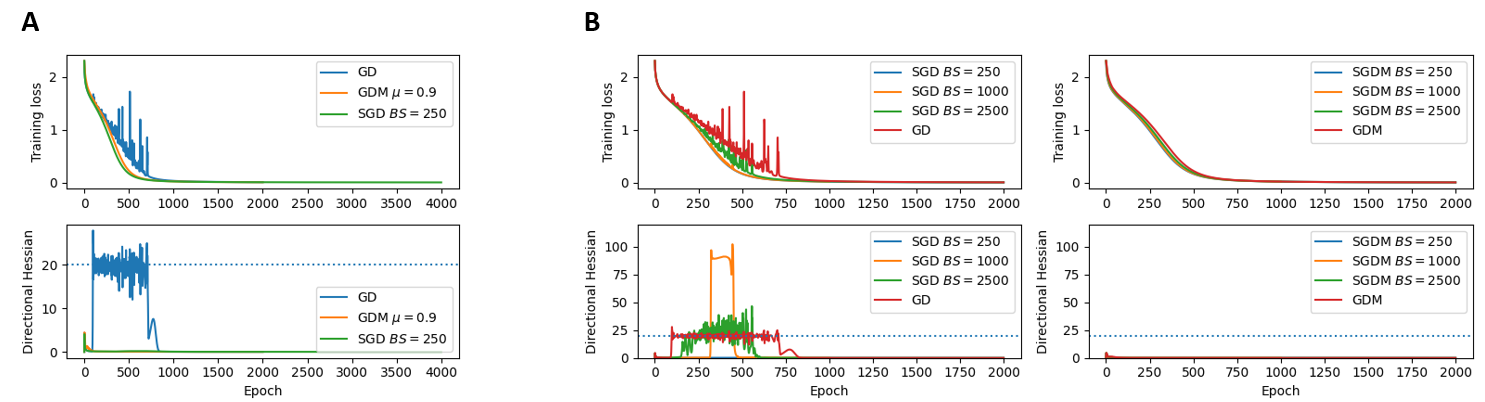}
    \caption{\textbf{Reducing  batch size and adding momentum play a similar role in preventing abrupt sharpening.} A: Reducing  batch size can help prevent abrupt sharpening. B: Adding momentum can extend the range of batch sizes where linear scaling rule holds.}
    \label{fig:CIFAR10_5k_SGD}
\end{figure}

From Figure \ref{fig:CIFAR10_5k_SGD}B we can see that  the performance of large-batch SGD is worse than small-batch SGD because large-batch enters EoS while small-batch does not. When momentum is added, large-batch also does not enter EoS and hence the range of linear scaling law is extended.

\section{Conclusion}
This paper investigates the relationship between momentum, learning rate, and batch size. 
We observe an align-and-deviate pattern when either fixing the batch size and increasing the effective learning rate (Figure \ref{fig:phase_transition}) or fixing the effective learning rate and increasing the batch size (Figure \ref{fig:batch_size_scaling}). Before the deviation point, the training speed of SGD and SGDM are almost the same. However, after the deviation point, SGDM outperforms SGD. We link the phase transition to the EoS and explain that momentum accelerates training via preventing the entrance of EoS.   We also observe and analyze the effect of batch size following the above framework. In summary, this paper provides thorough empirical result to see and analyze when and why momentum accelerates SGD under various settings.  

\section{Limitation}
Our current paper has two limitations. Firstly, the assessment of model architectures and datasets is not comprehensive. In the Appendix \ref{sec:dif_dataset_arch}, we perform experiments on commonly used model architectures and popular datasets. Our findings are based on these configurations. Though it is never possible to test all scenarios, more extensive experiments are required in future work. Secondly, our main focus is on the acceleration of momentum in SGD and we do not cover the widely used optimizer   Adam, which has a more complex analysis than SGDM due to its adaptive learning rate. We defer the study of Adam to future work.

\bibliography{main}

\begin{thebibliography}{30}
\providecommand{\natexlab}[1]{#1}
\providecommand{\url}[1]{\texttt{#1}}
\expandafter\ifx\csname urlstyle\endcsname\relax
  \providecommand{\doi}[1]{doi: #1}\else
  \providecommand{\doi}{doi: \begingroup \urlstyle{rm}\Url}\fi

\bibitem[Ahn et~al.(2022{\natexlab{a}})Ahn, Bubeck, Chewi, Lee, Suarez, and
  Zhang]{ahn2022learning}
K.~Ahn, S.~Bubeck, S.~Chewi, Y.~T. Lee, F.~Suarez, and Y.~Zhang.
\newblock Learning threshold neurons via the" edge of stability".
\newblock \emph{arXiv preprint arXiv:2212.07469}, 2022{\natexlab{a}}.

\bibitem[Ahn et~al.(2022{\natexlab{b}})Ahn, Zhang, and
  Sra]{ahn2022understanding}
K.~Ahn, J.~Zhang, and S.~Sra.
\newblock Understanding the unstable convergence of gradient descent.
\newblock In \emph{International Conference on Machine Learning}, pages
  247--257. PMLR, 2022{\natexlab{b}}.

\bibitem[Arora et~al.(2022)Arora, Li, and Panigrahi]{arora2022understanding}
S.~Arora, Z.~Li, and A.~Panigrahi.
\newblock Understanding gradient descent on the edge of stability in deep
  learning.
\newblock In \emph{International Conference on Machine Learning}, pages
  948--1024. PMLR, 2022.

\bibitem[Bollapragada et~al.(2022)Bollapragada, Chen, and
  Ward]{bollapragada2022fast}
R.~Bollapragada, T.~Chen, and R.~Ward.
\newblock On the fast convergence of minibatch heavy ball momentum.
\newblock \emph{arXiv preprint arXiv:2206.07553}, 2022.

\bibitem[Chen et~al.(2018)Chen, Wang, Zhao, Papailiopoulos, and
  Koutris]{chen2018effect}
L.~Chen, H.~Wang, J.~Zhao, D.~Papailiopoulos, and P.~Koutris.
\newblock The effect of network width on the performance of large-batch
  training.
\newblock \emph{Advances in neural information processing systems}, 31, 2018.

\bibitem[Chen et~al.(2023)Chen, Liang, Huang, Real, Wang, Liu, Pham, Dong,
  Luong, Hsieh, et~al.]{chen2023symbolic}
X.~Chen, C.~Liang, D.~Huang, E.~Real, K.~Wang, Y.~Liu, H.~Pham, X.~Dong,
  T.~Luong, C.-J. Hsieh, et~al.
\newblock Symbolic discovery of optimization algorithms.
\newblock \emph{arXiv preprint arXiv:2302.06675}, 2023.

\bibitem[Cohen et~al.(2021)Cohen, Kaur, Li, Kolter, and
  Talwalkar]{cohen2021gradient}
J.~M. Cohen, S.~Kaur, Y.~Li, J.~Z. Kolter, and A.~Talwalkar.
\newblock Gradient descent on neural networks typically occurs at the edge of
  stability.
\newblock \emph{arXiv preprint arXiv:2103.00065}, 2021.

\bibitem[Cohen et~al.(2022)Cohen, Ghorbani, Krishnan, Agarwal, Medapati,
  Badura, Suo, Cardoze, Nado, Dahl, et~al.]{cohen2022adaptive}
J.~M. Cohen, B.~Ghorbani, S.~Krishnan, N.~Agarwal, S.~Medapati, M.~Badura,
  D.~Suo, D.~Cardoze, Z.~Nado, G.~E. Dahl, et~al.
\newblock Adaptive gradient methods at the edge of stability.
\newblock \emph{arXiv preprint arXiv:2207.14484}, 2022.

\bibitem[Cutkosky and Mehta(2020)]{cutkosky2020momentum}
A.~Cutkosky and H.~Mehta.
\newblock Momentum improves normalized {SGD}.
\newblock In \emph{International conference on machine learning}, pages
  2260--2268. PMLR, 2020.

\bibitem[Defazio(2020)]{defazio2020momentum}
A.~Defazio.
\newblock Momentum via primal averaging: theoretical insights and learning rate
  schedules for non-convex optimization.
\newblock \emph{arXiv preprint arXiv:2010.00406}, 2020.

\bibitem[Deng et~al.(2009)Deng, Dong, Socher, Li, Li, and
  Fei-Fei]{deng2009imagenet}
J.~Deng, W.~Dong, R.~Socher, L.-J. Li, K.~Li, and L.~Fei-Fei.
\newblock {Imagenet}: A large-scale hierarchical image database.
\newblock In \emph{2009 IEEE conference on computer vision and pattern
  recognition}, pages 248--255. Ieee, 2009.

\bibitem[Dosovitskiy et~al.(2020)Dosovitskiy, Beyer, Kolesnikov, Weissenborn,
  Zhai, Unterthiner, Dehghani, Minderer, Heigold, Gelly,
  et~al.]{dosovitskiy2020image}
A.~Dosovitskiy, L.~Beyer, A.~Kolesnikov, D.~Weissenborn, X.~Zhai,
  T.~Unterthiner, M.~Dehghani, M.~Minderer, G.~Heigold, S.~Gelly, et~al.
\newblock An image is worth 16x16 words: Transformers for image recognition at
  scale.
\newblock \emph{arXiv preprint arXiv:2010.11929}, 2020.

\bibitem[Goyal et~al.(2017)Goyal, Doll{\'a}r, Girshick, Noordhuis, Wesolowski,
  Kyrola, Tulloch, Jia, and He]{goyal2017accurate}
P.~Goyal, P.~Doll{\'a}r, R.~Girshick, P.~Noordhuis, L.~Wesolowski, A.~Kyrola,
  A.~Tulloch, Y.~Jia, and K.~He.
\newblock Accurate, large minibatch {SGD}: Training {Imagenet} in 1 hour.
\newblock \emph{arXiv preprint arXiv:1706.02677}, 2017.

\bibitem[He et~al.(2016)He, Zhang, Ren, and Sun]{he2016deep}
K.~He, X.~Zhang, S.~Ren, and J.~Sun.
\newblock Deep residual learning for image recognition.
\newblock In \emph{Proceedings of the IEEE conference on computer vision and
  pattern recognition}, pages 770--778, 2016.

\bibitem[Kidambi et~al.(2018)Kidambi, Netrapalli, Jain, and
  Kakade]{kidambi2018insufficiency}
R.~Kidambi, P.~Netrapalli, P.~Jain, and S.~M. Kakade.
\newblock On the insufficiency of existing momentum schemes for stochastic
  optimization.
\newblock In \emph{International Conference on Learning Representations}, 2018.

\bibitem[Kingma and Ba(2014)]{kingma2014adam}
D.~P. Kingma and J.~Ba.
\newblock {Adam}: A method for stochastic optimization.
\newblock \emph{arXiv preprint arXiv:1412.6980}, 2014.

\bibitem[Kunstner et~al.(2022)Kunstner, Chen, Lavington, and
  Schmidt]{kunstnernoise}
F.~Kunstner, J.~Chen, J.~W. Lavington, and M.~Schmidt.
\newblock Noise is not the main factor behind the gap between {SGD} and {Adam}
  on transformers, but sign descent might be.
\newblock In \emph{The Eleventh International Conference on Learning
  Representations}, 2022.

\bibitem[Leclerc and Madry(2020)]{leclerc2020two}
G.~Leclerc and A.~Madry.
\newblock The two regimes of deep network training.
\newblock \emph{arXiv preprint arXiv:2002.10376}, 2020.

\bibitem[Li et~al.(2022)Li, Wang, and Li]{li2022analyzing}
Z.~Li, Z.~Wang, and J.~Li.
\newblock Analyzing sharpness along gd trajectory: Progressive sharpening and
  edge of stability.
\newblock \emph{arXiv preprint arXiv:2207.12678}, 2022.

\bibitem[Liu et~al.(2020)Liu, Gao, and Yin]{liu2020improved}
Y.~Liu, Y.~Gao, and W.~Yin.
\newblock An improved analysis of stochastic gradient descent with momentum.
\newblock \emph{Advances in Neural Information Processing Systems},
  33:\penalty0 18261--18271, 2020.

\bibitem[Ma et~al.(2022)Ma, Wu, and Ying]{ma2022multiscale}
C.~Ma, L.~Wu, and L.~Ying.
\newblock The multiscale structure of neural network loss functions: The effect
  on optimization and origin.
\newblock \emph{arXiv preprint arXiv:2204.11326}, 2022.

\bibitem[Ma et~al.(2018)Ma, Bassily, and Belkin]{ma2018power}
S.~Ma, R.~Bassily, and M.~Belkin.
\newblock The power of interpolation: Understanding the effectiveness of {SGD}
  in modern over-parametrized learning.
\newblock In \emph{International Conference on Machine Learning}, pages
  3325--3334. PMLR, 2018.

\bibitem[Merity et~al.(2016)Merity, Xiong, Bradbury, and
  Socher]{merity2016pointer}
S.~Merity, C.~Xiong, J.~Bradbury, and R.~Socher.
\newblock Pointer sentinel mixture models.
\newblock \emph{arXiv preprint arXiv:1609.07843}, 2016.

\bibitem[Polyak(1964)]{polyak1964some}
B.~T. Polyak.
\newblock Some methods of speeding up the convergence of iteration methods.
\newblock \emph{Ussr computational mathematics and mathematical physics},
  4\penalty0 (5):\penalty0 1--17, 1964.

\bibitem[Reddi et~al.(2019)Reddi, Kale, and Kumar]{reddi2019convergence}
S.~J. Reddi, S.~Kale, and S.~Kumar.
\newblock On the convergence of {Adam} and beyond.
\newblock \emph{arXiv preprint arXiv:1904.09237}, 2019.

\bibitem[Sagun et~al.(2016)Sagun, Bottou, and LeCun]{sagun2016eigenvalues}
L.~Sagun, L.~Bottou, and Y.~LeCun.
\newblock Eigenvalues of the hessian in deep learning: Singularity and beyond.
\newblock \emph{arXiv preprint arXiv:1611.07476}, 2016.

\bibitem[Shallue et~al.(2019)Shallue, Lee, Antognini, Sohl-Dickstein, Frostig,
  and Dahl]{shallue2019measuring}
C.~J. Shallue, J.~Lee, J.~Antognini, J.~Sohl-Dickstein, R.~Frostig, and G.~E.
  Dahl.
\newblock Measuring the effects of data parallelism on neural network training.
\newblock \emph{Journal of Machine Learning Research}, 20:\penalty0 1--49,
  2019.

\bibitem[Simonyan and Zisserman(2014)]{simonyan2014very}
K.~Simonyan and A.~Zisserman.
\newblock Very deep convolutional networks for large-scale image recognition.
\newblock \emph{arXiv preprint arXiv:1409.1556}, 2014.

\bibitem[Smith et~al.()Smith, Kindermans, Ying, and Le]{smithdon}
S.~L. Smith, P.-J. Kindermans, C.~Ying, and Q.~V. Le.
\newblock Don't decay the learning rate, increase the batch size.
\newblock In \emph{International Conference on Learning Representations}.

\bibitem[Zhu et~al.(2022)Zhu, Wang, Wang, Zhou, and Ge]{zhu2022understanding}
X.~Zhu, Z.~Wang, X.~Wang, M.~Zhou, and R.~Ge.
\newblock Understanding edge-of-stability training dynamics with a minimalist
  example.
\newblock \emph{arXiv preprint arXiv:2210.03294}, 2022.

\end{thebibliography}
\bibliographystyle{abbrvnat}

\appendix
\newpage

\section{Further experiments on other optimizers}
\label{sec:Nesterov}

\begin{figure}[h]
    \centering
    \includegraphics[width=0.6 \textwidth]{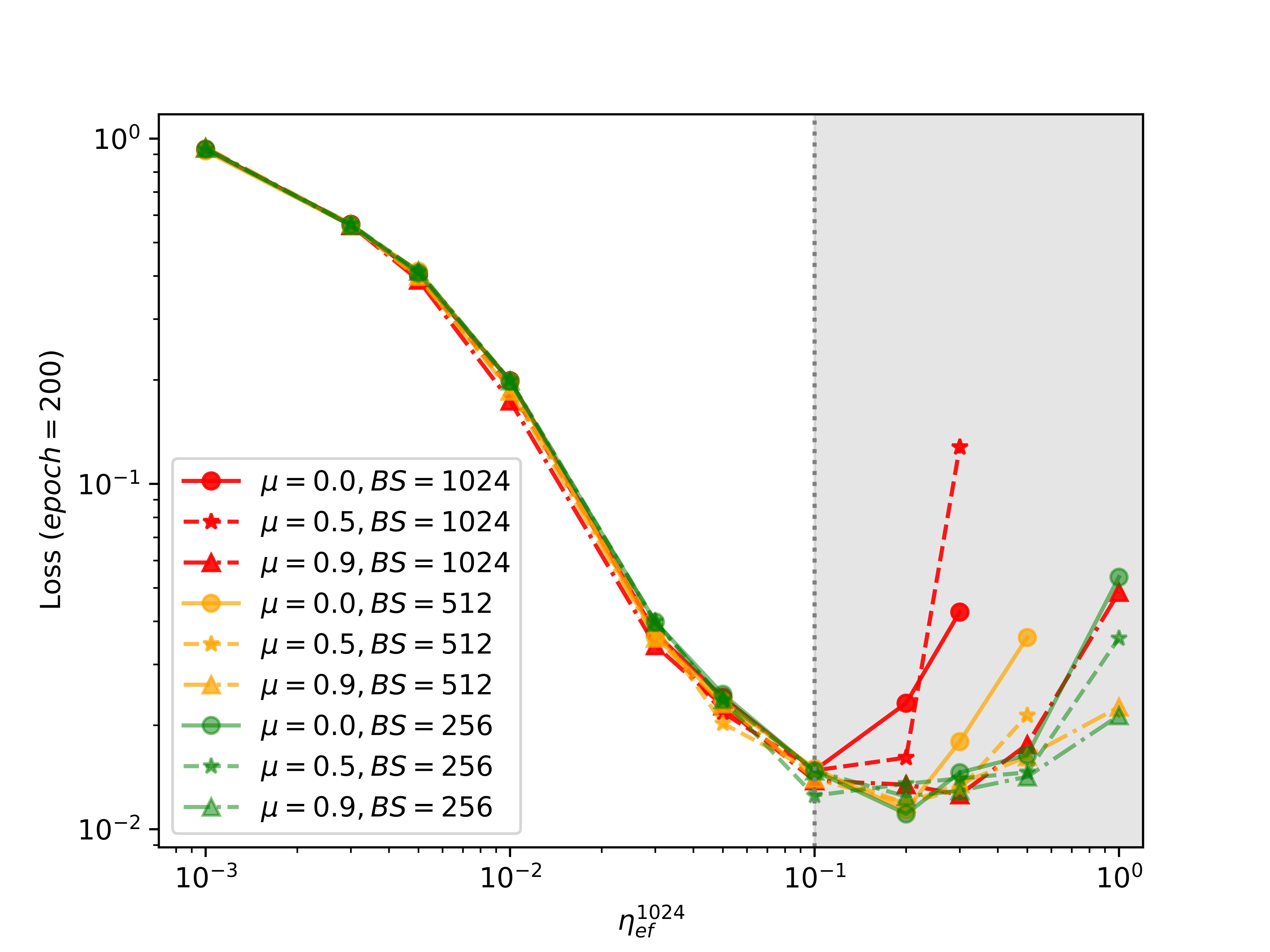}
    \caption{\textbf{The align-and-deviate pattern for Nesterov's Momentum.} The behavior of Nesterov's momentum is similar to that of Polyak's momentum (see Figure \ref{fig:batchsize-mu}) when increasing the effective learning rate.}
    \label{fig:nesterov_phase_transition}
\end{figure}

\subsection{On the effect of Nesterov's momentum}
To give a full picture of the effect of momentum, we further conduct experiments over SGD with Nesterov's momentum as a complement to the discussion about the Polyak's momentum in  the main text. Specifically, the update rule of Nesterov's momentum is given in Algorithm \ref{alg:nesterov}, which is the implementation in PyTorch.

\begin{algorithm}
    \centering
    \caption{SGD  with Nesterov's Momentum}\label{alg:nesterov}
    \hspace*{0.02in} 
    \begin{algorithmic}[1]
        \State \textbf{Input:} the loss function $\ell(w,z)$, the initial point $\bw_{1} \in \mathbb{R}^d$,  the batch size $b$, learning rates $\{\eta_t\}_{t=1}^{T}$, {  $\bom_0=0$, and momentum hyperparameters $\{\mu_t\}_{t=1}^{T}$}.
        \State \textbf{For} $t=1\rightarrow T$:
        \State ~~~~~Sample a  mini-batch of data $B_t$ with size $b$ 
       \State ~~~~~Calculate stochastic gradient $\nabla f_{B_t}(w_t)=\frac{1}{b}\sum_{z\in B_t}\ell (w_t,z)$
       \State ~~~~~Update $\bom_{t}\leftarrow${$\mu_t \bom_{t-1}+$}$\nabla  f_{B_t}(\bw_{t})$
        \State ~~~~~Update $\bw_{t+1}\leftarrow\bw_t-\eta_t (\mu_t \bom_{t}+\nabla  f_{B_t}(\bw_{t}))$
        \State \textbf{End For}
    \end{algorithmic}
\end{algorithm}
    

\subsubsection{Derivation of effective learning rate}
Like in the analysis of Polyak's momentum, we fix $\eta_t$ and $\mu_t$ to be constants. We show below that Nesterov's momentum has the similar effect as Polyak's momentum to amplify the update magnitude. Specifically, we have
\begin{equation*}
    \bom_t = \sum_{s=1}^t \mu^{t-s} \nabla f_{B_s}(\bw_s)\approx \frac{1-\mu^t}{1-\mu} \nabla f_{B_t}(\bw_t)\rightarrow  \frac{1}{1-\mu} \nabla f_{B_t}(\bw_t)\text{ as } t\rightarrow\infty,
\end{equation*}
and thus
\begin{equation*}
    \mu \bom_{t}+\nabla  f_{B_t}(\bw_{t})\approx\frac{1}{1-\mu}  \nabla f_{B_t}(\bw_t)\text{ as } t\rightarrow\infty.
\end{equation*}
To rule out such a effect, we define the effective learning rate of Nesterov's momentum as
\begin{equation*}    
\eta^{k}_{\ef}=\frac{1}{1-\mu} \cdot \frac{k}{b} \cdot \eta.
\end{equation*}

\subsubsection{Experiments}
\begin{figure}[h]
    \centering
    \includegraphics[width=0.95 \textwidth]{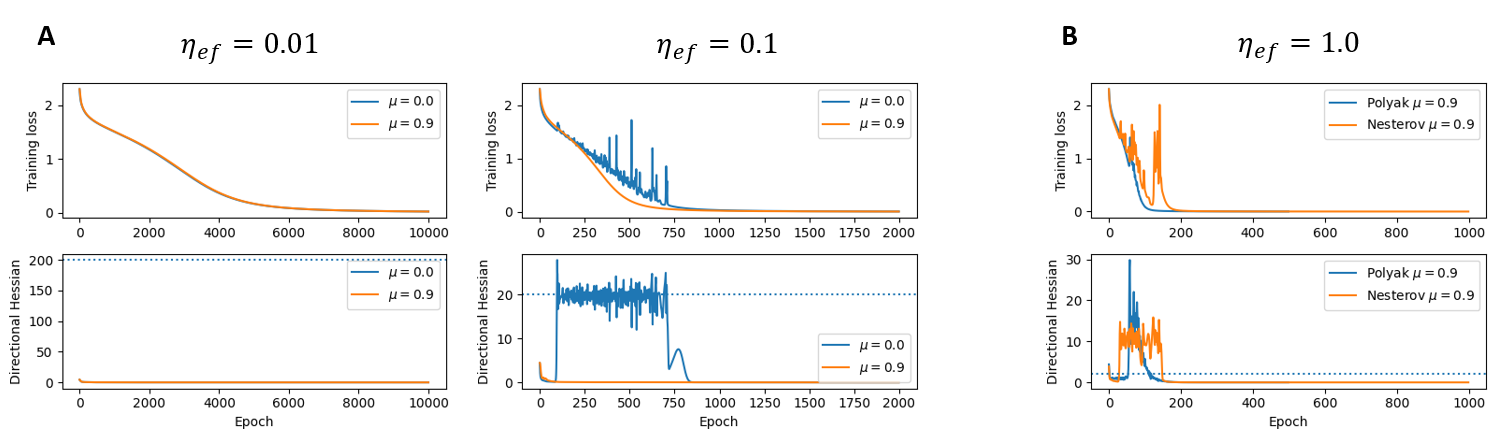}
    \caption{\textbf{Exploration of Nesterov's momentum} A: Nesterov's can also prevent the abrupt sharpening. B: Compared with Polyak, Nesterov's performs worse in preventing abrupt sharpening. Nesterov's GDM enters EoS earlier than Polyak's momentum. Additionally, the training speed of Nesterov's momentum is slower.}
    \label{fig:nesterov_abrupt_shapening}
\end{figure}

We conduct the experiments of SGD with Nesterov's momentum under the same setup as Figure~\ref{fig:batchsize-mu}. We plot the results in Figure \ref{fig:nesterov_phase_transition}. We can see that optimizers with Nesterov's momentum behave similarly to the counterparts with Polyak's momentum as shown in Figure \ref{fig:batchsize-mu}. Furthermore, we provide a further investigation on Nesterov's momentum by conducting an experiment based on the setup of Figure \ref{fig:gd}, plotted in Figure \ref{fig:nesterov_abrupt_shapening}. Figure \ref{fig:nesterov_abrupt_shapening}A shows that the Nesterov momentum can also prevent abrupt sharpening during the training process. Then, we give a simple comparison between Polyak's and Nesterov's momentum by comparing them together under $\eta_{\ef}=1.0$, where both of them will enter the EoS. In this setting, we find that compared with SGD with Polyak's momentum, SGD with Nesterov's momentum with same $\mu$ entere EoS earlier (Figure \ref{fig:nesterov_abrupt_shapening}B), and Polyak's momentum performs better than Nesterov's momentum under this setting.
However, future work with more extensive experiments is required before making any conclusive claim on optimizers with Nesterov's momentum. In this paper, we focus on the optimizers with Polyak's momentum. 


\subsection{On the effect of momentum in  Adam}
\begin{algorithm}
    \centering
    \caption{Adam}\label{alg:adam}
    \hspace*{0.02in} 
    \begin{algorithmic}[1]
        \State \textbf{Input:} the loss function $\ell(w,z)$, the initial point $\bw_{1} \in \mathbb{R}^d$,  the batch size $b$, learning rates $\{\eta_t\}_{t=1}^{T}$, {  $\bom_0=0$,$\bv=0$, and hyperparameters $\beta=(\beta_1,\beta_2)$}.
        \State \textbf{For} $t=1\rightarrow T$:
        \State ~~~~~Sample a  mini-batch of data $B_t$ with size $b$ 
       \State ~~~~~Calculate stochastic gradient $\nabla f_{B_t}(w_t)=\frac{1}{b}\sum_{z\in B_t}\ell (w_t,z)$
       \State ~~~~~Update $\bom_{t}\leftarrow${$\beta_1 \bom_{t-1}+$}$(1-\beta_1)\nabla  f_{B_t}(\bw_{t})$
        \State ~~~~~Update $\bv_{t}\leftarrow${$\beta_2 \bv_{t-1}+$}$(1-\beta_2)\nabla  f_{B_t}(\bw_{t})^{\odot 2}$
        \State ~~~~~Update $\bw_{t+1}\leftarrow\bw_t-\eta_t \frac{\bom_t/(1-\beta_1^t)}{\sqrt{\bv_t/(1-\beta_2^t)+\epsilon}}$
        \State \textbf{End For}
    \end{algorithmic}
\end{algorithm}
Here we step beyond SGD and provide a preliminary investigation on the effect of momentum in Adam \cite{kingma2014adam}.  The psedocode of Adam is given in Algorithm \ref{alg:adam}. We first derive the effective learning rate of Adam. Since
\begin{equation*}
    \bom_t = (1-\beta_1)\sum_{s=1}^t \beta_1^{t-s} \nabla f_{B_s}(\bw_s)\approx (1-\beta^t) \nabla f_{B_t}(\bw_t)\rightarrow \nabla f_{B_t}(\bw_t)\text{ as } t\rightarrow\infty,
\end{equation*}
we define the effective learning rate of Adam directly as the learning rate \ie $\eta_{\ef}=\eta$ (here we do not consider the effect of batch size since it is still an open problem for the effect of batch size in Adam).
We conduct the experiments of full-batch Adam under the same setup as Figure~\ref{fig:gd}. Since we focus on the effect of momentum, we fix $\beta_2=0.999$ (which is the default value in PyTorch) and choose $\beta_1$ from $\{0,0.5,0.9\}$. The results are plotted in Figure \ref{fig:adam}.

\begin{figure}[h]
    \centering
    \includegraphics[width=0.5 \textwidth]{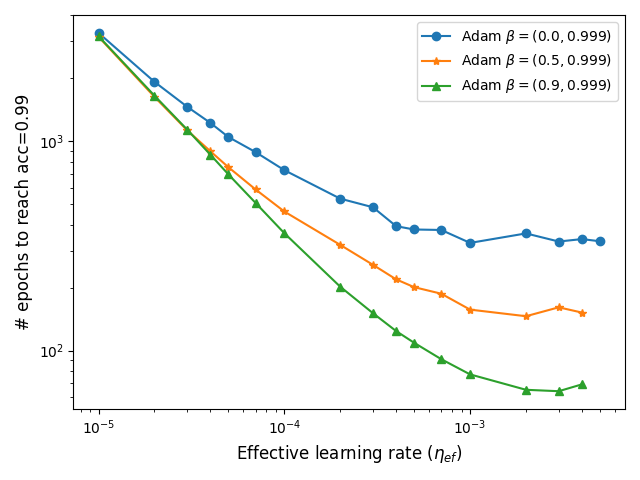}
    \caption{\textbf{The align-and-deviate pattern also exists in Adam.} When increasing the effective learning rate, Adam with different $\beta_1$ also exhibits an align-and-deviate pattern. Here $\beta=(\beta_1,\beta_2)$.}
    \label{fig:adam}
\end{figure}
\section{More exploration on the align-and-deviate pattern}
\label{sec:dif_dataset_arch}

\paragraph{Influence of model architecture.}
In this study, we investigate whether varying model designs have an impact on the final conclusions. We set the batch size to 1024 and allocate an epoch budget of T = 200. The experiments are carried out using the Cifar10 dataset. All the considered architectures exhibit the align-and-deviate pattern. However, the effect of momentum varies across different models. For instance, momentum plays a more significant role in improving performance for VGG13 than that for VGG13BN as observed in Figure \ref{fig:model}. Moreover, momentum is particularly important for training the ViT\citep{dosovitskiy2020image} model, as depicted in Figure \ref{fig:model} (ViT).

\begin{figure}[h]
    \centering
    \includegraphics[width=1 \textwidth]{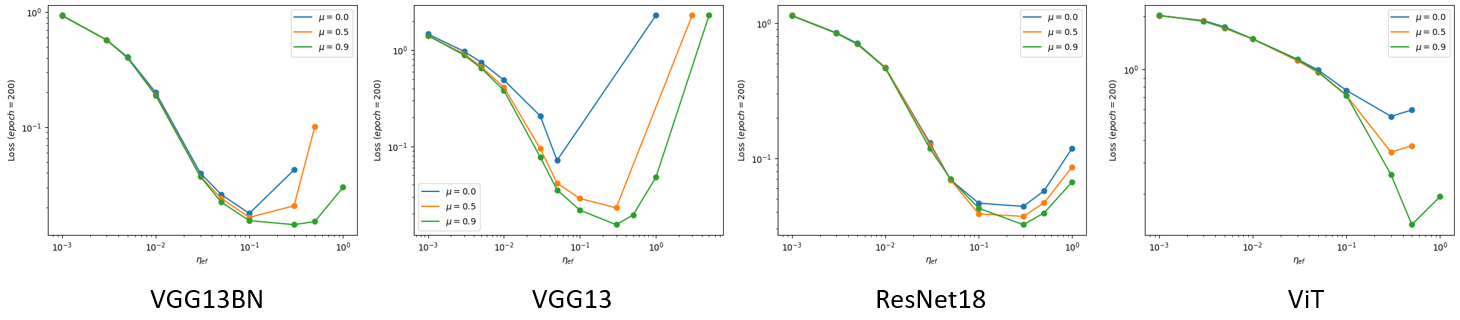}
    \caption{\textbf{Experiments with different neural network architectures.} Momentum has a more significant role in VGG13 and ViT network compared with ResNet18\cite{he2016deep} and VGG13BN network.}
    \label{fig:model}
\end{figure}

\paragraph{Influence of datasets}The experiments, as illustrated in Figure \ref{fig:different_datasets}, are carried out using a variety of datasets, such as Cifar100\cite{simonyan2014very}, WikiText2\cite{merity2016pointer}, and ImageNet\cite{deng2009imagenet}. For each dataset, we employ a different model: VGG13BN for Cifar100, Transformer\footnote{\url{https://pytorch.org/tutorials/beginner/transformer_tutorial.html}} for WikiText2, and ResNet18 for ImageNet. We consistently observe the align-and-deviate pattern across these datasets. However, the positions of deviation points differ considerably among them. This variation could be attributed to factors such as dataset size, task difficulty, and other aspects.

\begin{figure}[h]
    \centering
    \includegraphics[width=1 \textwidth]{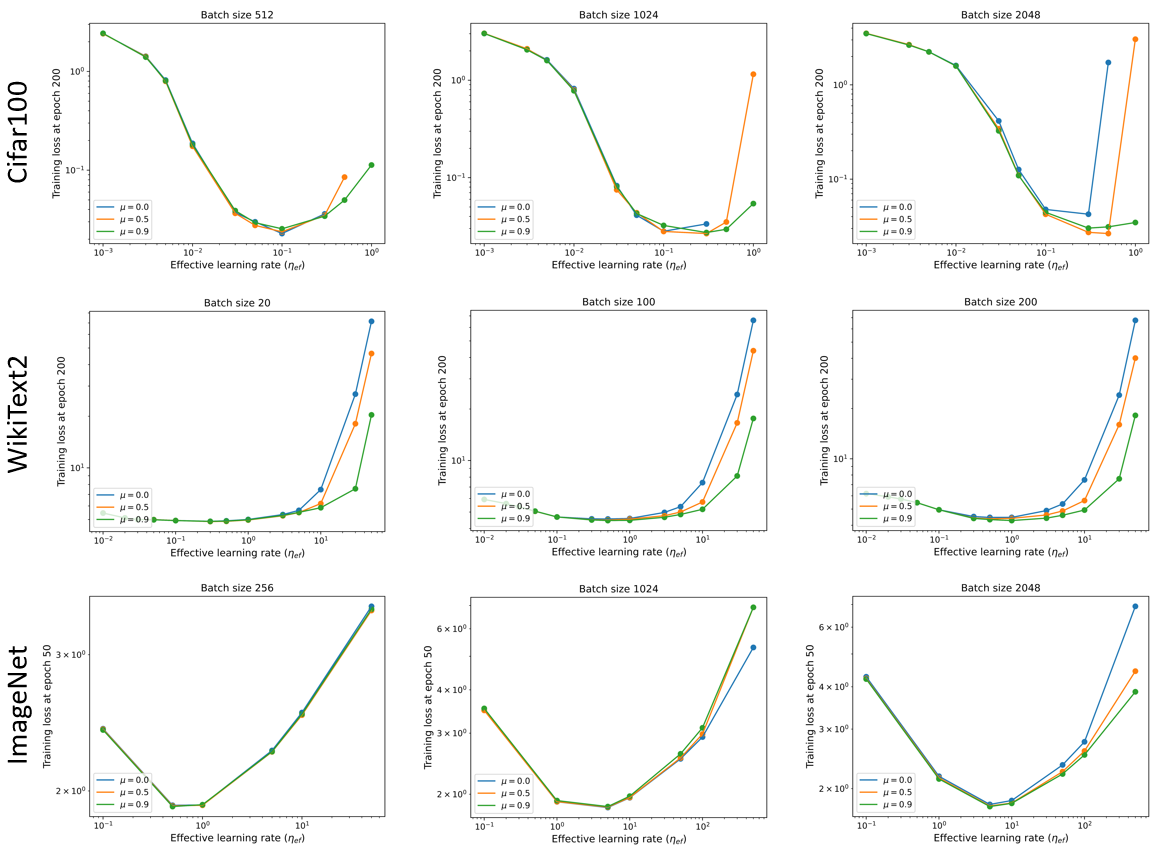}
    \caption{\textbf{Experiments across different datasets.} The align-and-deviate pattern is consistently observed along these datasets.}
    \label{fig:different_datasets}
\end{figure}

\paragraph{Different epoch setting.} In this paper, we use the training loss at epoch $T$ to represent the training speed of optimizers. The $T$ is chosen to be 200 in our experiments. Here, we explore different values of T from $\{50,100,150,200 \}$, and we want to check whether the choice of  $T$ matters. From Figure \ref{fig:epoch}, we observe  that the align-and-deviate pattern exists no matter what value of $T$ is chosen.

\begin{figure}[h]
    \centering
    \includegraphics[width=0.95 \textwidth]{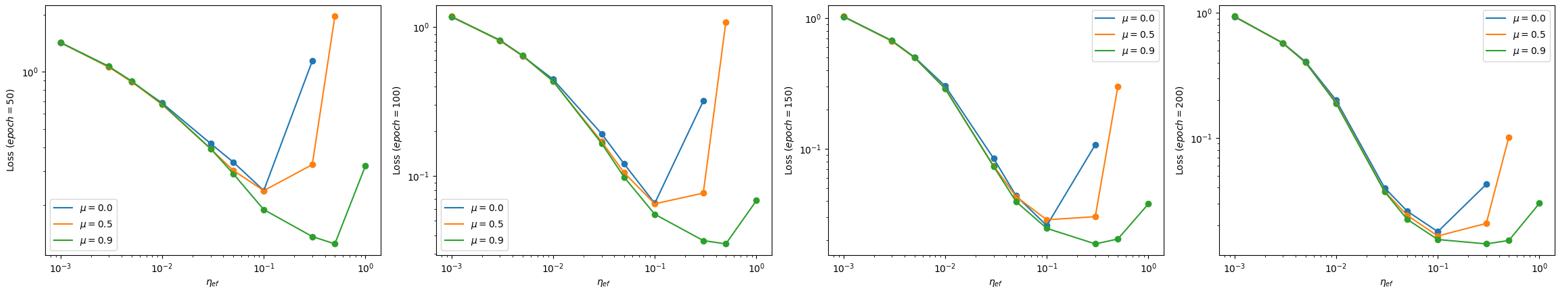}
    \caption{\textbf{Exploration the align-and-deviation pattern with different epoch settings.} The align-and-deviate pattern is observed in all these settings. }
    \label{fig:epoch}
\end{figure}

\section{Proofs of theoretical results}
\label{sec:proof}

\subsection{Proof of Propositions \ref{prop: convergence_direction} and \ref{prop: convergence_direction_momentum}}
\label{appen: direction}
\begin{proof}[Proof of Proposition \ref{prop: convergence_direction}]
To begin with, by linear transformation, we can assume without loss of generality that $\bA$ is a diagonal matrix, $\boldsymbol{b}=0$ and $c=0$. Denote $\bA=\operatorname{Diag}(\lambda_1,\cdots,\lambda_d)$, where $\lambda_{\max}(\bA)=\lambda_1\ge \lambda_2 \ge \dots \ge \lambda_d= \lambda_{\min}(\bA)$. Denote $\bw_t=(\bw_{t,1},\cdots,\bw_{t,d})$. Let $m_1$ be the number of eigenvalues equal to $\lambda_{\max}$. Let $m_2$ be the number of eigenvalues equal to $\lambda_{\min}$. Then, $\mathcal{A}=\operatorname{span}\{\boldsymbol{e}_1,\cdots, \boldsymbol{e}_{m_1}\}$. Based on the update rule of GD, we obtain that
\begin{equation*}
    \bw_{t+1,i}=(1-\eta \lambda_i) \bw_{t,i},
\end{equation*}
and thus $\bw_{t,i}=(1-\eta \lambda_i)^t \bw_{0,i}$. Let $\mathcal{K}=\{\boldsymbol{x}:\boldsymbol{x}_{d-m_2+1}=\cdots=\boldsymbol{x}_{d}=0\} \cup \{\boldsymbol{x}:\boldsymbol{x}_{1}=\cdots=\boldsymbol{x}_{m_1}=0\}$. Obviously, $\mathcal{K}$ is a zero-measure set. Then, we have that $\lim_{t\rightarrow \infty }\frac{\bw_t}{ \Vert \bw_t \Vert }\in \mathcal{A}$ if and only if $\vert 1-\eta\lambda_{1}\vert >\vert 1-\eta\lambda_{d}\vert$, which gives $\eta>\frac{2}{\lambda_1+\lambda_d}$.
\end{proof}

\begin{proof}[Proof of Proposition \ref{prop: convergence_direction_momentum}] Let $\lambda_1,\cdots,\lambda_d$, $ \mathcal{K}$, $m_1$, and $m_2$ be defined in the proof of Proposition \ref{prop: convergence_direction}. Then, the update rule of GDM gives
\begin{align*}
     \bw_{t+1,i}= &\bw_{t,i}+\mu(\bw_{t,i}-\bw_{t-1,i})-(1-\mu)\eta_{\ef} \partial_i f(\bw_{t,i})
     \\
     =&\bw_{t,i}+\mu(\bw_{t,i}-\bw_{t-1,i})-(1-\mu)\eta_{\ef} \lambda_i\bw_{t,i}.
\end{align*}
Solving the above series gives 
\begin{equation*}
    \bw_{t,i} = c_{i,1} d_{i,1}^t+c_{i,2} d_{i,2}^t,
\end{equation*}
where $d_{i,1}=\frac{(1+\mu)-(1-\mu)\eta_{\ef}\lambda_i}{2}+\sqrt{\left(\frac{(1+\mu)-(1-\mu)\eta_{\ef}\lambda_i}{2}\right)^2-\mu}$, and $d_{i,2}=\frac{(1+\mu)-(1-\mu)\eta_{\ef}\lambda_i}{2}-\sqrt{\left(\frac{(1+\mu)-(1-\mu)\eta_{\ef}\lambda_i}{2}\right)^2-\mu}$.

Therefore, $\lim_{t\rightarrow \infty} \frac{\bw_t}{ \Vert \bw_t \Vert  }\in \mathcal{A}$ if and only if $\max\{d_{1,1},d_{1,2}\} >\max_{i\ne 1}\{d_{i,1},d_{i,2}\} $. On the other hand, note that $g(x)= \max\{\vert \frac{(1+\mu)-x}{2}+\sqrt{\left(\frac{(1+\mu)-x}{2}\right)^2-\mu}\vert , \vert \frac{(1+\mu)-x}{2}-\sqrt{\left(\frac{(1+\mu)-x}{2}\right)^2-\mu}\vert \}$ is symmetric with respect to $x=1+\mu$, and the maximum value of $g(x)$ over any interval $[a,b]$ is achieved at $a$ or $b$, then  $\lim_{t\rightarrow \infty} \frac{\bw_t}{ \Vert \bw_t \Vert  }\in \mathcal{A}$ if and only if $(1-\mu)\eta_{\ef}\lambda_1+(1-\mu)\eta_{\ef}\lambda_d>2(1+\mu)$ and $(1-\mu)\eta_{\ef}\lambda_1> (1+\sqrt{\mu})^2$.

The proof is completed.
\end{proof}

\subsection{Proof of Proposition \ref{prop:gdgdm}}

Without loss of generality, choose $\bom_0=\frac{\nabla f(\bw_1^{\GDM})}{1-\mu}$ (since the influence of $\bom_0$ diminishes exponentially fast). To begin with, define an auxiliary sequence as $\bu_1=\bw_1^{\GDM}-\frac{\mu}{1-\mu}\eta_{\ef} \nabla f(\bw_1^{\GDM})$ and $\bu_t=\frac{\bw_{t}^{\GDM}-\mu \bw_{t-1}^{\GDM}}{1-\mu}$. One can easily verify that the update rule of GDM is equivalent to
 \begin{gather}
\label{lem: auxilliary}
    \bu_{t+1}=\bu_t - \eta_{\ef} \nabla  f
(\bw_t^{\GDM}),
    \bw_{t+1}= (1-\mu) \bu_{t+1} + \mu \bw_t^{\GDM}.
\end{gather}
When $t=1$, we have
\begin{equation*}
    \Vert \nabla f(\bw^{\GDM}) \Vert^2 = \Vert \nabla f(\bw^{\GD}) \Vert^2 
\end{equation*}
by definition.
We then show that when $ t\ge k\ge 2$, $f(\bu_k)-f(\bu_{k-1})\approx f(\bw_k^{\GD})-f(\bw_{k-1}^{\GD})$ and $\Vert \nabla f(\bw_k^{\GD}) \Vert \approx \Vert \nabla f(\bw_k^{\GDM}) \Vert$ by induction. Suppose that the claim holds for the $k$-th iteration. Then, for the $(k+1)$-th iteration, by Taylor's expansion, we have
\small
\begin{gather*}
    f(\bu_k)\approx f(\bw_{k-1}^{\GDM})+\langle \bu_k-\bw_{k-1}^{\GDM},\nabla f(\bw_{k-1}^{\GDM}) \rangle+\frac{H(\bw_{k-1}^{\GDM},\bu_k-\bw_{k-1}^{\GDM})}{2} \Vert  \bu_k-\bw_{k-1}^{\GDM} \Vert^2 ,
    \\
        f(\bu_{k+1})\approx f(\bw_k^{\GDM})+\langle \bu_{k+1}-\bw_k^{\GDM},\nabla f(\bw_k^{\GDM}) \rangle+\frac{H(\bw_k^{\GDM},\bu_{k+1}-\bw_k^{\GDM})}{2} \Vert  \bu_{k+1}-\bw_k^{\GDM} \Vert^2 ,
        \\
          f(\bw_k^{\GDM})\approx f(\bw_{k-1}^{\GDM})+\langle \bw_k^{\GDM}-\bw_{k-1}^{\GDM},\nabla f(\bw_{k-1}^{\GDM}) \rangle+\frac{H(\bw_{k-1}^{\GDM},\bw_k^{\GDM}-\bw_{k-1}^{\GDM})}{2} \Vert  \bw_k^{\GDM}-\bw_{k-1}^{\GDM} \Vert^2.
\end{gather*}
\normalsize
By the definition of $\bu_k$, we have that $\bu_k-\bw_{k-1}^{\GDM}=\frac{\bw_k^{\GDM}-\bw_{k-1}^{\GDM}}{1-\mu}$, and thus $H(\bw_{k-1}^{\GDM},\bu_k-\bw_{k-1}^{\GDM}) = H(\bw_{k-1}^{\GDM},\bw_k^{\GDM}-\bw_{k-1}^{\GDM}) \approx 0$. Similarly, we have $H(\bw_k^{\GDM},\bu_{k+1}-\bw_k^{\GDM})\approx 0$ and $H(\bw_{k-1}^{\GDM},\bw_k^{\GDM}-\bw_{k-1}^{\GDM})\approx 0$. Therefore, summing up the above three equations, we have
\begin{equation*}
    f(\bu_{k+1})\approx f(\bu_k) +\langle \bu_{k+1}-\bw_k^{\GDM},\nabla f(\bw_k^{\GDM}) \rangle + \langle \bw_k^{\GDM}-\bu_k,\nabla f(\bw_{k-1}^{\GDM}) \rangle.
\end{equation*}

Since 
\begin{align*}
    &\langle \bw_k^{\GDM}-\bw_{k-1}^{\GDM},\nabla f(\bw_{k-1}^{\GDM}) \rangle
    \\
    =& \langle \bw_k^{\GDM}-\bw_{k-1}^{\GDM},\nabla f(\bw_k^{\GDM}) \rangle -\langle \bw_k^{\GDM}-\bw_{k-1}^{\GDM},\nabla f(\bw_k^{\GDM})-\nabla f(\bw_{k-1}^{\GDM}) \rangle
    \\
    \approx &\langle \bw_k^{\GDM}-\bw_{k-1}^{\GDM},\nabla f(\bw_k^{\GDM}) \rangle-H(\bw_{k-1}^{\GDM},\bw_k^{\GDM}-\bw_{k-1}^{\GDM})   \Vert  \bw_k^{\GDM}-\bw_{k-1}^{\GDM} \Vert^2
    \\
    \approx &\langle \bw_k^{\GDM}-\bw_{k-1}^{\GDM},\nabla f(\bw_k^{\GDM}) \rangle,
\end{align*}
we further have 
 \begin{align*}
    f(\bu_{k+1})\approx& f(\bu_k) +\langle \bu_{k+1}-\bw_k^{\GDM},\nabla f(\bw_k^{\GDM}) \rangle + \langle \bw_k^{\GDM}-\bu_k,\nabla f(\bw_k^{\GDM}) \rangle
    \\
    =& f(\bu_k) +\langle \bu_{k+1}-\bu_k^{\GDM},\nabla f(\bw_k^{\GDM}) \rangle 
    \\
    =& f(\bu_k) -\eta_{\ef}\Vert \nabla f(\bw_k^{\GDM}) \Vert^2 .
\end{align*}

Following the same routine, we obtain
 \begin{align*}
    f(\bw_{k+1}^{\GD})\approx & f(\bw_k^{\GD}) -\eta_{\ef}\Vert \nabla f(\bw_k^{\GD}) \Vert^2 ,
\end{align*}
and thus we obtain  $f(\bu_{k+1})-f(\bu_k)\approx f(\bw_{k+1}^{\GD})-f(\bw_k^{\GD})$ due to  that $\Vert \nabla f(\bw_k^{\GD}) \Vert^2\approx \Vert \nabla f(\bw_k^{\GDM}) \Vert^2$ by the induction hypothesis. 

Meanwhile, we have
\begin{align*}
    \Vert \nabla f(\bw_{k+1}^{\GD}) \Vert^2\approx&\Vert \nabla f(\bw_k^{\GD}) \Vert^2+\langle \nabla f(\bw_k^{\GD}), \nabla f(\bw_{k+1}^{\GD})-\nabla f(\bw_k^{\GD})\rangle
    \\
    \approx &\Vert \nabla f(\bw_k^{\GD}) \Vert^2+\langle \nabla f(\bw_k^{\GD}), \nabla^2 f(\bw_k^{\GD}) (\nabla f(\bw_{k+1}^{\GD})-\nabla f(\bw_k^{\GD}))\rangle
    \\
    =&\Vert \nabla f(\bw_k^{\GD}) \Vert^2+\eta_{\ef}H(\bw_k^{\GD}, \bw_{k+1}^{\GD}-\bw_k^{\GD})\Vert \nabla f(\bw_{k+1}^{\GD}) \Vert^2\approx\Vert \nabla f(\bw_k^{\GD}) \Vert^2.
\end{align*}
Following the similar routine, we obtain
\begin{equation*}
     \Vert \nabla f(\bw_{k+1}^{\GDM}) \Vert^2 \approx \Vert \nabla f(\bw_k^{\GDM}) \Vert^2,
\end{equation*}
and thus we obtain $\Vert \nabla f(\bw_{k+1}^{\GD}) \Vert^2\approx \Vert \nabla f(\bw_{k+1}^{\GDM}) \Vert^2$  due to  that $\Vert \nabla f(\bw_k^{\GD}) \Vert^2\approx \Vert \nabla f(\bw_k^{\GDM}) \Vert^2$ by the induction hypothesis. 

As a conclusion, we obtain that $f(\bu_t)-f(\bu_1)\approx f(\bw_t^{\GD})- f(\bw_1^{\GD})$.

Meanwhile, as discussed above, we have
\begin{gather*}
    f(\bu_t)\approx f(\bw_{k-1}^{\GDM})+\langle \bu_t-\bw_{t-1}^{\GDM},\nabla f(\bw_{t-1}^{\GDM}) \rangle ,
        \\
          f(\bw_{t}^{\GDM})\approx f(\bw_{t-1}^{\GDM})+\langle \bw_{t}^{\GDM}-\bw_{t-1}^{\GDM},\nabla f(\bw_{t-1}^{\GDM}) \rangle.
\end{gather*}
Summing up the two equations gives 
\begin{align*}
    f(\bu_t) \approx& f(\bw_t^{\GDM})+\langle \bu_t-\bw_{t}^{\GDM},\nabla f(\bw_{t-1}^{\GDM}) \rangle
    \\
    =& f(\bw_t^{\GDM})+\frac{\mu}{1-\mu}\langle \bw^{\GDM}_t-\bw^{\GDM}_{t-1},\nabla f(\bw_{t-1}^{\GDM}) \rangle
    \\
    =& f(\bw_t^{\GDM})-\frac{\mu}{1-\mu}\eta_{\ef} \left\langle (1-\mu)\sum_{s=1}^{t-1} \mu^{t-1-s} \nabla f(\bw_{s}^{\GDM})+\mu^{t-1} \nabla f(\bw_{1}^{\GDM}),\nabla f(\bw_{t-1}^{\GDM}) \right\rangle.
\end{align*}
Since
\small
\begin{align*}
    &-\eta_{\ef}\left\langle (1-\mu)\sum_{s=1}^{t-1} \mu^{t-1-s} \nabla f(\bw_{s}^{\GDM})+\mu^{t-1} \nabla f(\bw_{1}^{\GDM}),\nabla f(\bw_{t-1}^{\GDM}) \right\rangle
    \\
    =&-(1-\mu) \eta_{\ef}\Vert \nabla f(\bw_{t-1}^{\GDM})\Vert^2+\mu\left\langle \bw_{t-1}^{\GDM}-\bw_{t-2}^{\GDM},\nabla f(\bw_{t-1}^{\GDM}) \right\rangle
    \\
    =&-(1-\mu) \eta_{\ef}\Vert \nabla f(\bw_{t-1}^{\GDM})\Vert^2+\mu\left\langle \bw_{t-1}^{\GDM}-\bw_{t-2}^{\GDM},\nabla f(\bw_{t-2}^{\GDM}) \right\rangle+\mu\left\langle \bw_{t-1}^{\GDM}-\bw_{t-2}^{\GDM},-\nabla f(\bw_{t-1}^{\GDM})+\nabla f(\bw_{t-2}^{\GDM}) \right\rangle
    \\
    \approx & -(1-\mu) \eta_{\ef}\Vert \nabla f(\bw_{t-1}^{\GDM})\Vert^2+\mu\left\langle \bw_{t-1}^{\GDM}-\bw_{t-2}^{\GDM},\nabla f(\bw_{t-2}^{\GDM}) \right\rangle+\mu H(\bw_{t-2}^{\GDM},\bw_{t-1}^{\GDM}-\bw_{t-2}^{\GDM})\left\Vert \bw_{t-1}^{\GDM}-\bw_{t-2}^{\GDM}\right\Vert^2
    \\
    \approx& -(1-\mu) \eta_{\ef}\Vert \nabla f(\bw_{t-1}^{\GDM})\Vert^2+\mu\left\langle \bw_{t-1}^{\GDM}-\bw_{t-2}^{\GDM},\nabla f(\bw_{t-2}^{\GDM}) \right\rangle
    \\
    \approx & \cdots
    \\
    \approx &-\eta_{\ef} (1-\mu)\sum_{s=1}^{t-1} \mu^{t-1-s} \Vert \nabla f(\bw_{s}^{\GDM}) \Vert^2-\eta_{\ef}\mu^{t-1} \Vert \nabla f(\bw_{1}^{\GDM})\Vert^2
    \\
    \approx& -\eta_{\ef}\Vert \nabla f(\bw_{1}^{\GDM})\Vert^2,
\end{align*}
\normalsize
and 
\begin{align*}
    f(\bu_1) \approx &f(\bw_1^{\GDM})+\langle \nabla f (\bw_1^{\GDM}),\bu_1-\bw_1^{\GDM} \rangle+\frac{H(\bw_1^{\GDM},\bu_1-\bw_1^{\GDM})}{2} \Vert \bu_1-\bw_1^{\GDM} \Vert^2
    \\
    =&f(\bw_1^{\GDM})+\langle \nabla f (\bw_1^{\GDM}),\bu_1-\bw_1^{\GDM} \rangle+\frac{H(\bw_1^{\GDM},\bw_2^{\GDM}-\bw_1^{\GDM})}{2} \Vert \bu_1-\bw_1^{\GDM} \Vert^2
    \\
    \approx& f(\bw_1^{\GDM})+\langle \nabla f (\bw_1^{\GDM}),\bu_1-\bw_1^{\GDM} \rangle
    \\
    =& f(\bw_1^{\GD})-\eta_{\ef}\Vert \nabla f(\bw_{1}^{\GDM})\Vert^2.
\end{align*}
As a conclusion, we have
\begin{equation*}
    f(\bw_t^{\GDM})\approx   f(\bu_t) +\eta_{\ef}\frac{\mu}{1-\mu}\Vert \nabla f(\bw_{1}^{\GDM})\Vert^2 \approx  f(\bw_t^{\GD})- f(\bw_1^{\GD})+f(\bu_1)+\eta_{\ef}\Vert \nabla f(\bw_{1}^{\GDM})\Vert^2\approx f(\bw_t^{\GD}).
\end{equation*}

The proof is completed.

\end{document}